\def\halfwidth{2.750535in}
\newcommand\notsotiny{\@setfontsize\notsotiny\@vipt\@viipt}
\begin{document}

\twocolumn[

\aistatstitle{Functional Flow Matching}

\aistatsauthor{Gavin Kerrigan \And Giosue Migliorini \And Padhraic Smyth}

\aistatsaddress{ Department of Computer Science\\ University of California, Irvine \\ \texttt{gavin.k@uci.edu} \And   Department of Statistics \\ University of California, Irvine \\ \texttt{gmiglior@uci.edu} \And  Department of Computer Science\\ University of California, Irvine \\ \texttt{smyth@ics.uci.edu}  } ]

\begin{abstract}
    We propose Functional Flow Matching (FFM), a function-space generative model that generalizes the recently-introduced Flow Matching model to operate in infinite-dimensional spaces. Our approach works by first defining a path of probability measures that interpolates between a fixed Gaussian measure and the data distribution, followed by learning a vector field on the underlying space of functions that generates this path of measures. Our method does not rely on likelihoods or simulations, making it well-suited to the function space setting. We provide both a theoretical framework for building such models and an empirical evaluation of our techniques. We demonstrate through experiments on several real-world benchmarks that our proposed FFM method outperforms several recently proposed function-space generative models.
\end{abstract}

\section{Introduction}
    \label{sect:intro}

    Generative models have seen a meteoric rise in capabilities on various domains, such as images \citep{dhariwal2021diffusion, kang2023scaling}, video \citep{saharia2022photorealistic, ho2022imagen}, and audio \citep{kong2020diffwave, goel2022s}. Despite these recent successes, many methods implicitly assume that the data distribution of interest is supported on a finite-dimensional space. However, there are multiple important applications that involve data that is inherently infinite-dimensional. For instance, draws from a time series, solutions to partial differential equations, and audio signals are naturally represented as \emph{functions}.
    
    For functional data, the typical generative modeling approach is to operate directly on a discretization of the data \citep{tashiro2021csdi, rasul2021autoregressive, yan2021scoregrad}. However, these approaches are often tied to a chosen discretization and are often ill-posed in the limit of zero discretization error. To overcome these limitations, recent methods have proposed building generative models directly in function spaces. For instance, \citet{kerrigan2022diffusion} and \citet{lim2023score} propose function-space diffusion models, and  \citet{rahman2022generative} propose a function-space GAN. 
    
    We add to this growing literature on function-space generative models. In particular, we propose Functional Flow Matching (FFM), a continuous-time normalizing flow model for functional data. Given that Euclidean normalizing flow methods \citep{papamakarios2021normalizing, kobyzev2020normalizing} are posed in terms of densities, which generally do not exist in infinite-dimensional spaces, a key challenge of performing this generalization is to pose a purely measure-theoretic model. In particular, our model is a generalization of the recently proposed Flow Matching model of \citet{lipman2022flow}. 
    
    Our proposed FFM model first constructs a path of conditional Gaussian measures, approximately interpolating between a fixed reference Gaussian measure and a given function. A path of measures interpolating between said reference measure and the data distribution is then obtained by marginalizing these conditional paths over the data distribution. We then learn a vector field on our space of functions which approximately generates this path of measures, allowing us to generate samples from our data distribution by solving a differential equation parametrized by this vector field. See Figure \ref{fig:leading_figure} for an illustration of our approach.

    Our approach allows for simulation-free training, in the sense that no samples are drawn from the model during training. Moreover, our training objective is a regression objective, allowing us to avoid difficulties with regards to maximum likelihood training in a functional setting. We empirically verify our framework on several time series datasets and a two-dimensional Navier-Stokes dataset, demonstrating that the FFM model significantly outperforms several recently proposed function-space generative models across a variety of metrics and domains. 
    
    \begin{figure*}[!t]
        \centering
        \includegraphics{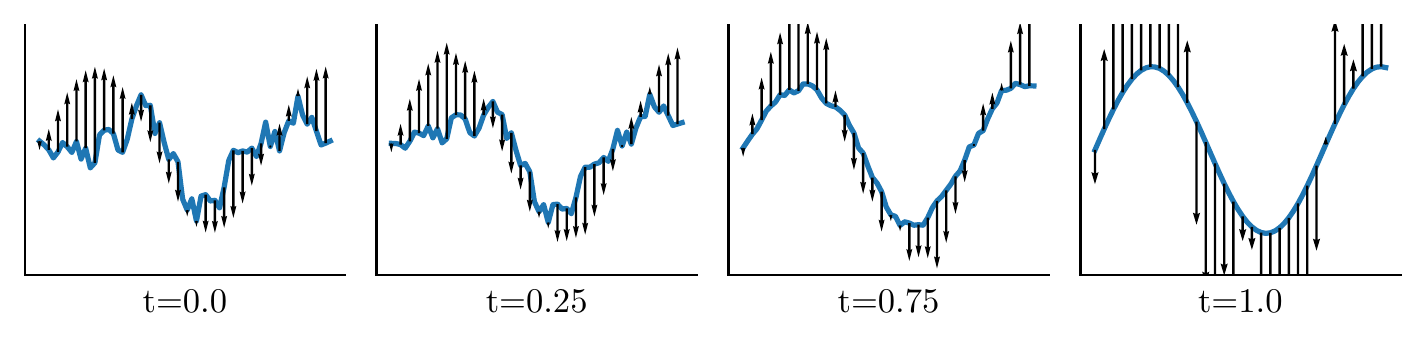}
        \caption{An illustration of our FFM method. The vector field $v_t(f) \in \FF$ (in black) transforms a noise sample $g \sim \mu_0 = \NN(0, C_0)$ drawn from a Gaussian process with a Mat\'ern kernel (at $t = 0$) to the function $f(x) = \sin(x)$ (at $t=1$) via solving a function-space ODE. By sampling many such $g \sim \mu_0$, we define a conditional path of measures $\mu_t^f$ approximately interpolating between $\NN(0, C_0)$ and the function $f$, which we marginalize over samples $f \sim \nu$ from the data distribution in order to obtain a path of measures approximately interpolating between $\mu_0$ and $\nu$.}
        \label{fig:leading_figure}
    \end{figure*}

    \section{Related Work}
    
    \paragraph{Flow Matching and Normalizing Flows} We generalize the Flow Matching model of \citet{lipman2022flow}, which is a novel approach to simulation-free continuous-time normalizing flows \citep{chen2018neural, papamakarios2021normalizing, kobyzev2020normalizing} that has demonstrated impressive capabilities on several image generation tasks. However, Flow Matching and other recently proposed simulation-free continuous normalizing flows have only been explored for data distributions supported on finite-dimensional spaces, such as Euclidean spaces \citep{lipman2022flow, albergo2022building, liu2022flow, neklyudov2022action} and Riemannian manifolds \citep{chen2023riemannian, ben2022matching}. In contrast, we propose Functional Flow Matching, a continuous-time normalizing flow for infinite-dimensional data. To the best of our knowledge, this is the first normalizing flow model posed in infinite-dimensional spaces.
    
    \paragraph{Function Space Generative Models} Recently, a number of authors have proposed function-space generalizations of various deep generative models. Close in spirit to our work are those which generalize diffusion models \citep{ho2020denoising, song2020score, song2019generative} to the infinite-dimensional setting. In particular, \citet{kerrigan2022diffusion} and \citet{lim2023score} propose function-space generalizations of discrete-time diffusion models, whereas \citet{pidstrigach2023infinite}, \citet{franzese2023continuous} and \citet{hagemann2023multilevel} propose function-space generalizations of continuous-time diffusion models. Beyond diffusion models, function-space GANs \citep{rahman2022generative} and energy-based models \citep{lim2023energy} have also been proposed. Our work adds to this growing literature on function-space generative models by proposing the first function-space normalizing flow model. 
        
    \paragraph{Discrete Functional Generative Models} While there has been growing interest in developing generative models directly in infinite-dimensional spaces, there has also been work proposing generative models for functional data that operate directly on a discretization of the underlying space, for example, diffusion models for time series \citep{tashiro2021csdi, rasul2021autoregressive, yan2021scoregrad} (see \citet{lin2023diffusion} for a recent survey on these methods). Other models, such as normalizing flows \citep{rasul2020multivariate}, latent variable models \citep{zhou2022deep, rubanova2019latent, yildiz2019ode2vae}, and GANs \citep{yoon2019time, kidger2021neural} have also been explored. However, these methods all operate directly on the discrete observations of a given time series. This has several drawbacks: for instance, it is difficult to transfer a model trained on one discretization to another, and often these models are ill-posed in the functional limit (i.e. as the discretization size goes to zero). In contrast, our work begins from a function-space point of view, where we only  discretize in order to perform computations.
    
\section{Notation and Background}
\label{sect:notation_background}

    We begin by introducing some notation and background which we will later use to construct our model. Section \ref{sect:prelims} introduces notions related to flows on function spaces, and Section \ref{sect:weak_continuity} introduces the weak continuity PDE \citep{stepanov2017three} which plays a key role in our constructions.
    
\subsection{Preliminaries}
\label{sect:prelims}

    Let $\XX \subset \R^d$ and consider a real separable Hilbert space $\FF$ of functions $f: \XX \to \R$ equipped with the Borel $\sigma$-algebra $\BB(\FF)$. We consider the setting where there is a probability measure $\nu$ on $\FF$ from which we have samples, i.e. random functions drawn from the data distribution $\nu$. Our goal is to build a generative model which allows us to sample from $\nu$. Importantly, any such generative model should be discretization-invariant, in the sense that the model should be able to generate functions which may be observed on any finite but arbitrary discretization of $\XX$.

    In this work, we consider paths of probability measures $(\mu_t)_{t \in [0, 1]}$, where $\mu_t \in \PP(\FF)$ is a probability measure on $\FF$ at every time $t \in [0, 1]$. In particular, we will construct a path of measures which approximately interpolates between a fixed reference measure $\mu_0$ at time $t=0$ and the data distribution at time $t=1$, so that $\mu_1 \approx \nu$. This interpolation is approximate in the sense that $\mu_1$ will be a smoothed version of the data distribution, obtained from $\nu$ via convolution with a Gaussian measure having small variance \citep[Appendix~A]{bogachev1998gaussian}. 
    
    We consider paths of probability measures which are generated locally, in the sense that there is some underlying time-dependent vector field on $\FF$ such that the path of measures $(\mu_t)_{t \in [0, 1]}$ is obtained by flowing samples $g \sim \mu_0$ along said vector field. More formally, a \emph{(time-dependent) vector field} on $\FF$ is a mapping $v: [0, 1] \times \FF \to \FF$.
    
    The \emph{flow} associated to a vector field $(v_t)_{t \in [0, 1]}$ is the mapping $\phi: [0, 1] \times \FF \to \FF$ specified by the initial value problem
    \begin{equation} \label{eqn:flow_ode}
        \partial_t \phi_t(g) = v_t(\phi_t(g)) \qquad \phi_0(g) = g
    \end{equation}

    As written, Equation \eqref{eqn:flow_ode} represents an ordinary differential equation (ODE) on the abstract and potentially infinite-dimensional space $\FF$. Such ODEs are often dubbed \emph{abstract differential equations} \citep{ORegan1997, zaidman1999functional}. We assume that all vector fields in this work are sufficiently regular such that a solution to Equation \eqref{eqn:flow_ode} is guaranteed to exist for all $t \in [0, 1]$ and $\nu$-a.e. initial condition $g$.
    
    Given any initial probability measure $\mu_0 \in \PP(\FF)$, we may consider the path of probability measures generated by the flow $\phi$. That is, for any $t \in [0, 1]$ we define the measure $\mu_t$ via the pushforward of $\mu_0$ along $\phi_t$, i.e. $\mu_t = [\phi_t]_{\#}\mu_0$, where $\phi_t$ is assumed to be measurable for all $t \in [0, 1]$.  

\subsection{Weak Continuity PDE}
\label{sect:weak_continuity}

    Previously, we noted how one may obtain a path of probability measures from an initial probability measure $\mu_0 \in \PP(\FF)$ by considering the pushforward of $\mu_0$ along the flow of a given vector field $(v_t)_{t \in [0, 1]}$.

    Conversely, we say that the vector field $(v_t)_{t \in [0, 1]}$ \emph{generates} the path of measures $(\mu_t)_{t \in [0, 1]}$ if the path $(\mu_t)_{t \in [0, 1]}$  is obtained via the pushforward of $\mu_0$ along the flow associated with $(v_t)_{t \in [0, 1]}$. Directly verifying whether a vector field generates a given path of measures (by verifying the pushforward relationship) is typically infeasible. Instead, we can check that the two satisfy the \emph{continuity equation}

    \begin{equation} \label{eqn:continuity_pde}
        \partial_t \mu_t + \text{div}(v_t \mu_t) = 0 \qquad \text{on } \; \FF \times [0, 1].
    \end{equation}

    We interpret this PDE in the weak sense \citep[Ch.~8]{ambrosio2005gradient}, by which we mean that the pair $(v_t)_{t \in [0, 1]}$ and $(\mu_t)_{t \in [0, 1]}$ satisfy Equation \eqref{eqn:continuity_pde} if
    \begin{equation} \label{eqn:weak_continuity_pde}    
         \int_0^1 \int_\FF \left( \partial_t \varphi(g, t) + \langle v_t(g), \nabla_g \varphi(g, t) \right) \d\mu_t (g) \d t = 0
    \end{equation}

    for all $\varphi: \FF \times [0, 1] \to \R$ in some appropriate space of test functions. 
    
    We refer to \citet[Theorem~3.4]{stepanov2017three} for a rigorous discussion of this result in general metric spaces. Such results are often referred to as \emph{superposition principles}. In the Euclidean setting, if one assumes that all measures admit a density with respect to some common dominating measure, it suffices to check the continuity equation directly, in which $\mu_t$ is replaced by a density $p_t$ \citep{ambrosio2005gradient, villani2009optimal}.
    
    Throughout this work, we assume all paths of measures and vector fields are sufficiently regular such that the superposition principle applies, i.e. it suffices to check the continuity equation to conclude whether a given path of measures is generated by a given vector field. In Theorem \ref{theorem:gluing_formula}, we use the weak form of the continuity equation in order to construct a marginal vector field from conditional vector fields, such that this marginal vector field is guaranteed to generate our desired interpolating path of measures.

\section{Function Space Flow Matching}
\label{sect:functional_flow_matching}
    
    Building on the notions in Section \ref{sect:notation_background}, we now introduce our Functional Flow Matching model (FFM). The Flow Matching model is a recently proposed continuous-time normalizing flow method developed for finite-dimensional spaces \citep{lipman2022flow, chen2023riemannian}. Our FFM approach builds on this earlier line of work to develop a non-trivial extension of these methods to infinite-dimensional spaces. Proofs for all of our claims are available in Appendix \ref{appendix:proofs}.
    
    The main technical challenge of generalizing the existing techniques to infinite-dimensional spaces is that existing methods rely heavily on the notion of a probability density function, either with respect to the Lebesgue measure in the case of a Euclidean space or with respect to the canonical volume measure on a Riemannian manifold. In infinite-dimensional (Banach) spaces, there does not exist an analogue of the Lebesgue measure -- that is, any nonzero translation invariant Borel measure must assign infinite measure to any open set \citep{eldredge2016analysis}. 

    \begin{figure*}[!t]
    \centering
        \begin{subfigure}[C]{0.33\textwidth}
            \centering
            \includegraphics[width=\textwidth]{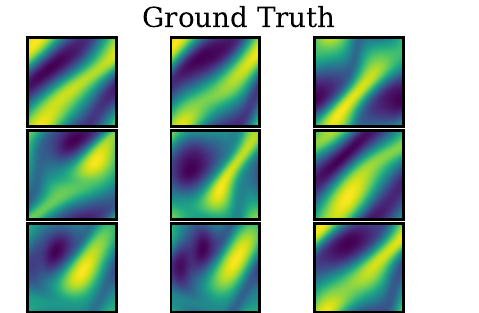}
        \end{subfigure}
        \hfill
        \begin{subfigure}[C]{0.66\textwidth}  
            \centering 
            \includegraphics[width=\textwidth]{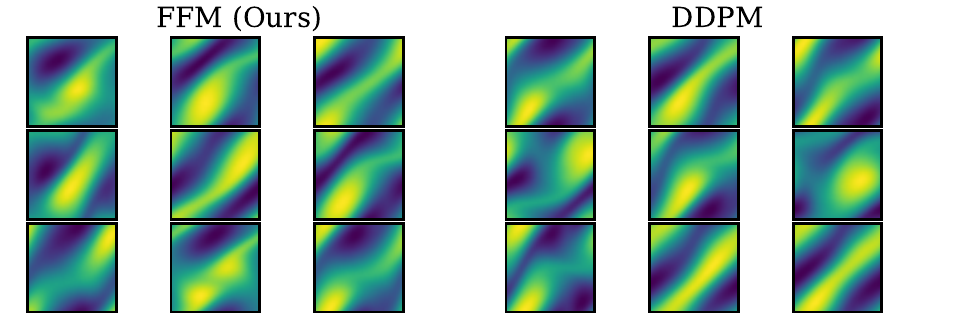}  
        \end{subfigure} 
        \begin{subfigure}[C]{0.33\textwidth}
            \centering
            \caption{MSEs between the density and spectra of the real and generated samples.}
            \label{tab:ns_results}
            \resizebox{!}{0.2\textwidth}{%
            \begin{tabular}{l|cc}
            \toprule
                               & Density & Spectrum \\
            \midrule
                 FFM-OT (Ours) & \textbf{3.7e-5}  & \textbf{9.3e1}    \\
                 DDPM          & 9.9e-5           & 5.0e2    \\
                 DDO           & 2.9e-2           & 1.6e5    \\
                 GANO          & 2.5e-3           & 3.2e4    \\
             \bottomrule
            \end{tabular}
            }
            \label{tab:my_label}
        \end{subfigure}
        \hfill
        \begin{subfigure}[C]{0.66\textwidth}   
            \centering 
            \includegraphics[width=\textwidth]{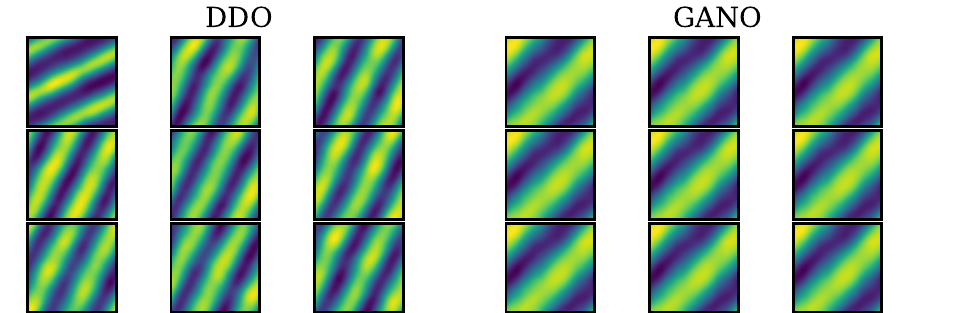}   
            \label{fig:mean and std of net44}
        \end{subfigure}
        \caption[]
        {\small Samples from the Navier-Stokes dataset (``ground truth'') and samples from the various models considered in this work. Our FFM-OT model and DDPM qualitatively match the ground truth samples, whereas DDO and GANO suffer from mode collapse. Table \ref{tab:ns_results} compares the density and spectra between 1000 real and generated samples, showing that our proposed method outperforms the others by a large margin on pointwise metrics. Note that we do not study the FFM-VP parametrization on this dataset due to computational costs.}
        \label{fig:ns_samples}
    \end{figure*}
    
    As such, our FFM model is necessarily posed in measure-theoretic terms. Our derivations shed light on strict requirements needed to obtain a well-posed model. For instance, we require an absolute continuity assumption between the conditional and marginal measures defined in Section \ref{sect:construction_paths}. In Euclidean spaces, such assumptions are easy to satisfy, but are non-trivial in infinite-dimensional spaces, even for the simple setting of Gaussian measures (see Section \ref{sect:ac_of_mixtures}). Moreover, our derivations demonstrate that naively applying a white-noise Gaussian measure (as is done in the Euclidean setting) leads to an ill-posed model in function space.
    
    \subsection{Constructing a Path of Measures}
    \label{sect:construction_paths}
    
    Suppose we associate to every $f \in \FF$ a path of measures $(\mu_t^f)_{t \in [0, 1]}$ such that $\mu_0^f = \mu_0$ is some fixed reference measure and $\mu_1^f$ is concentrated around $f$. For instance, $\mu_1^f$ could be a Gaussian measure with mean $f$ and a covariance having small operator norm. We then marginalize over all such measures, where we mix over the data distribution $\nu$. That is, we define a new probability measure $\mu_t \in \PP(\FF)$ for $t \in [0, 1]$ via
    
    \begin{equation} \label{eqn:marginal_measure}
        \mu_t(A) = \int \mu_t^f(A) \d \nu(f) \qquad \forall A \in \BB(\FF).
    \end{equation}
    
    Due to our conditions on $\mu_t^f$, we then have that $\mu_0 = \mu_0$ and $\mu_1 \approx \nu$ is approximately the data distribution. Suppose further that each conditional path of measures $\mu_t^f$ is generated by some known vector field $v_t^f$. In the following theorem, we claim that we may construct a vector field $v_t$ which generates the \emph{marginal} path of measures $\mu_t$ from the conditional vector fields $v_t^f$. 
    
    \begin{restatable}{theorem}{gluing} \label{theorem:gluing_formula}
        Assume that $\int_0^1 \int_{\FF\times\FF} || v_t^f(g)|| \d \mu_t^f(g) \d\nu(f) \d t < \infty$. If $\mu_t^f \ll \mu_t$ for $\nu$-a.e. $f$ and almost every $t \in [0, 1]$, then the vector field
    
        \begin{equation} \label{eqn:marginal_vector_field}
        v_t(g) = \int_\FF v_t^{f}(g) \frac{\d \mu_t^{f}}{\d \mu_t}(g) \d \nu(f)
        \end{equation}
    
        generates the marginal path of measures measure $(\mu_t)_{t\in [0,1]}$ specified by Equation \eqref{eqn:marginal_measure}. That is, $(v_t)_{t \in [0,1]}$ and $(\mu_t)_{t \in [0, 1]}$ solve the continuity equation \eqref{eqn:continuity_pde}. Here, $\d \mu_t^f / \d \mu_t$ is the Radon-Nikodym derivative of the conditional measure with respect to the marginal.
    \end{restatable}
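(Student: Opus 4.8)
The plan is to verify that the pair $(v_t)_{t\in[0,1]}$ and $(\mu_t)_{t\in[0,1]}$ solves the weak continuity equation \eqref{eqn:weak_continuity_pde}; the superposition principle assumed in Section \ref{sect:weak_continuity} then yields the claim that $(v_t)_{t\in[0,1]}$ generates $(\mu_t)_{t\in[0,1]}$. The starting point is that each conditional pair $(v_t^f)_{t\in[0,1]}$, $(\mu_t^f)_{t\in[0,1]}$ is assumed to generate $\mu_t^f$, so for $\nu$-a.e.\ $f$ we have
\[
\int_0^1 \int_\FF \left( \partial_t \varphi(g,t) + \langle v_t^f(g), \nabla_g \varphi(g,t)\rangle \right) \d\mu_t^f(g)\, \d t = 0
\]
for every admissible test function $\varphi$. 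The idea is to integrate this identity against $\nu$ and rearrange until it matches the marginal weak form, so the whole argument reduces to a sequence of integral interchanges together with one application of the Radon--Nikodym defining property.

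First I would substitute the mixture definition \eqref{eqn:marginal_measure} into the marginal weak form. For any $\mu_t$-integrable $h$, Equation \eqref{eqn:marginal_measure} extends from indicators to $\int_\FF h\,\d\mu_t = \int_\FF \int_\FF h\,\d\mu_t^f\,\d\nu(f)$ by a standard approximation (monotone-class) argument, which handles the drift-free term $\partial_t\varphi$ immediately. For the term involving $v_t$, I would insert the definition \eqref{eqn:marginal_vector_field}, use bilinearity to move the $\d\nu(f)$ integral outside the inner product, and swap the $\mu_t$ and $\nu$ integrations by Fubini--Tonelli. This leaves a factor $(\d\mu_t^f/\d\mu_t)(g)$ integrated against $\d\mu_t(g)$, at which point the defining property of the Radon--Nikodym derivative --- valid precisely because $\mu_t^f \ll \mu_t$ --- collapses $\int_\FF h\,(\d\mu_t^f/\d\mu_t)\,\d\mu_t = \int_\FF h\,\d\mu_t^f$ with $h(g)=\langle v_t^f(g),\nabla_g\varphi(g,t)\rangle$. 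After these steps both terms of the marginal weak form are iterated integrals over $g\sim\mu_t^f$, $f\sim\nu$, and $t$; a final Fubini pulls the $\nu$-integral outside, and the bracketed $(g,t)$-integral is exactly the conditional continuity equation above, which vanishes for $\nu$-a.e.\ $f$.

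The crux is justifying these interchanges together with the well-posedness of $v_t$ itself, and this is exactly what the hypothesis $\int_0^1\int_{\FF\times\FF} ||v_t^f(g)||\,\d\mu_t^f(g)\,\d\nu(f)\,\d t < \infty$ is for: it guarantees that $v_t(g)$ is $\mu_t$-a.e.\ finite and that every double integral appearing after substitution is absolutely convergent, so Tonelli applies to each swap. I would check this explicitly, using Cauchy--Schwarz, $|\langle v_t^f(g),\nabla_g\varphi\rangle| \le ||v_t^f(g)||\,||\nabla_g\varphi||$, and the boundedness of $\nabla_g\varphi$ on the support of the test function, so that the assumed $L^1$ bound dominates every integrand. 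The absolute-continuity hypothesis $\mu_t^f \ll \mu_t$ is the other indispensable ingredient, since without it the Radon--Nikodym derivative --- and hence the very definition of $v_t$ in \eqref{eqn:marginal_vector_field} --- would be meaningless. I expect that \emph{verifying} this absolute continuity actually holds for the Gaussian conditional paths constructed later (Section \ref{sect:ac_of_mixtures}) will be the genuinely delicate point in the infinite-dimensional setting; for the present theorem, however, it is taken as a hypothesis, and the argument above is then essentially a bookkeeping exercise in Fubini and Radon--Nikodym.
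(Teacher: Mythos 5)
Your proposal is correct and follows essentially the same route as the paper's proof: expand the marginal terms via the mixture definition \eqref{eqn:marginal_measure}, invoke the conditional continuity equations, perform the change of measure through the Radon--Nikodym derivative, and use Fubini--Tonelli together with the commutation of Bochner integrals and inner products to identify the marginal field \eqref{eqn:marginal_vector_field}. The only (cosmetic) difference is direction of bookkeeping --- the paper transforms the $\partial_t\varphi$ term into the $\langle v_t,\nabla_g\varphi\rangle$ term, while you substitute both terms and reduce their sum to the $\nu$-integral of the conditional equations --- and you are in fact somewhat more explicit than the paper about why the integrability hypothesis licenses each interchange.
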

    
    If this vector field $v_t$ were known, we could generate samples by solving the corresponding flow ODE (Equation \eqref{eqn:flow_ode}) with initial condition $f \sim \mu_0$ drawn from our fixed reference measure. However, the vector field specified by Equation \eqref{eqn:marginal_vector_field} is intractable. Thus, we will learn a model to approximate this unknown vector field. Note that our model will necessarily be a mapping between infinite dimensional spaces. We discuss how to parametrize such a model in Section \ref{sect:experiments}.
    
    The main technical assumption in Theorem \ref{theorem:gluing_formula} is that the conditional distributions $\mu_t^f$ are $\nu$-almost surely absolutely continuous with respect to the marginal distribution $\mu_t$. Although this assumption is not generally true even in the Euclidean setting, we prove in Theorem \eqref{theorem:mutually_ac_mixture} that this assumption holds under an additional equivalency condition on the conditional measures. In Section \ref{sect:ac_of_mixtures}, we discuss how this equivalency assumption may be satisfied under a Gaussian parametrization. 
    
    \begin{restatable}{theorem}{mutuallyac}
    \label{theorem:mutually_ac_mixture}
        Consider a probability measure $\nu$ on $\FF$ and a collection of measures $\mu_t^f$ parametrized by $f \in \FF$. Suppose that the collection of parametrized measures are $\nu$-a.e. mutually absolutely continuous. Define the marginal measure $\mu_t$ via Equation \eqref{eqn:marginal_measure}.Then, $\mu_t^f \ll \mu_t$ for $\nu$-a.e. $f$. 
    \end{restatable}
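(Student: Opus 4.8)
The plan is to argue at a fixed time $t \in [0,1]$ and to lean on the hypothesis of mutual absolute continuity precisely to overcome the usual obstruction in such statements. Recall that establishing $\mu_t^f \ll \mu_t$ for $\nu$-a.e.\ $f$ requires producing a \emph{single} $\nu$-null set $N$ such that for every $f \notin N$ and every Borel set $A$ with $\mu_t(A) = 0$ we have $\mu_t^f(A) = 0$. The direct approach---fixing a $\mu_t$-null set $A$ and unfolding the definition in Equation \eqref{eqn:marginal_measure}---only yields a null set of exceptional $f$ that depends on $A$, and since there are uncountably many candidate sets $A$ we cannot simply take a union over them. The entire role of the equivalence hypothesis is to decouple the exceptional set from $A$.

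Concretely, I would first let $G \subseteq \FF$ be the $\nu$-full-measure set on which the family is mutually absolutely continuous, so that $\nu(G) = 1$ and $\mu_t^f \sim \mu_t^{f'}$ for all $f, f' \in G$; in particular these measures all share the same null sets. I then propose $N = G^{c}$ as the exceptional set. Now fix an arbitrary Borel set $A$ with $\mu_t(A) = 0$. Using Equation \eqref{eqn:marginal_measure} together with $\nu(G^c) = 0$,
\begin{equation}
    0 = \mu_t(A) = \int_\FF \mu_t^f(A)\, \d\nu(f) = \int_G \mu_t^f(A)\, \d\nu(f),
\end{equation}
and since the integrand is nonnegative we conclude $\mu_t^f(A) = 0$ for $\nu$-a.e.\ $f \in G$. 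As $\nu(G) = 1 > 0$, the collection of such $f$ is nonempty, so we may fix a single witness $f_1 \in G$ with $\mu_t^{f_1}(A) = 0$.

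The key step is then to upgrade this single witness to a statement about every $f \in G$: since $\mu_t^f$ is equivalent to $\mu_t^{f_1}$ for each $f \in G$, and $A$ is $\mu_t^{f_1}$-null, mutual absolute continuity forces $\mu_t^f(A) = 0$ for all $f \in G$. Because $A$ was an arbitrary $\mu_t$-null set, this shows $\mu_t^f \ll \mu_t$ for every $f \in G$, i.e.\ for $\nu$-a.e.\ $f$, as desired. I expect the main obstacle to be exactly the quantifier-ordering issue identified at the outset; the argument succeeds only because mutual absolute continuity lets a witness $f_1$ produced separately for each $A$ govern the behavior of all $\mu_t^f$ simultaneously, so that the single set $N = G^c$ works uniformly over all null sets $A$.
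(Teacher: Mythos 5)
Your proof is correct and takes essentially the same approach as the paper's: both arguments extract the full-measure set $G$ on which the family is mutually equivalent, restrict the mixture integral defining $\mu_t$ to $G$, and use the equivalence to propagate nullity of a fixed $\mu_t$-null set $A$ across all of $G$ at once, so that the exceptional set $G^c$ is independent of $A$. The only difference is one of direction: the paper argues by contradiction (if $\mu_t^f(A) > 0$ for some $f \in G$, equivalence forces $\mu_t^g(A) > 0$ for every $g \in G$ and hence $\mu_t(A) > 0$), whereas you argue directly via a witness $f_1 \in G$ with $\mu_t^{f_1}(A) = 0$; these are contrapositive formulations of the same step.
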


\subsection{Special Case: Gaussian Measures}
\label{sect:gaussian_setting}

    In this section, we specialize to the setting where the reference measure $\mu_0$ and conditional measures $\mu_t^f$ are chosen to be Gaussian measures \citep{bogachev1998gaussian}. We make this ansatz for several reasons. Foremost, our marginal vector field (Equation \eqref{eqn:marginal_vector_field}) requires an absolute continuity assumption. In infinite-dimensional (separable) Banach spaces, the absolute continuity of Gaussian measures is well-understood, e.g. via the Cameron-Martin theorem and the Feldman-H\'ajek theorem \citep{da2014stochastic, bogachev1998gaussian}. Moreover, we are able to parametrize our Gaussian measures via Gaussian processes \citep{rasmussen2006gaussian, wild2022generalized} for which a number of flexible choices of kernels have been explored in the machine learning literature.
    
    More formally, for any $f\in \FF$ we define a conditional path of probability measures $(\mu_t)_{t \in [0, 1]}$ to be a Gaussian measure $\mu_t^f = \NN(m_t^f, C_t^f)$ with mean $m_t^f \in \FF$ and covariance operator $C_t^f: \FF \to \FF$. Note that the $C_t^f$ are necessarily symmetric, non-negative and trace-class \citep[Ch.~2]{da2014stochastic}. In particular, this rules out multiples of the identity operator (corresponding to white noise) as a valid choice for $C_t^f$, as these operators are not compact and hence not trace-class.
    
    In practice, we parametrize $t \mapsto m_t^f$ by a Fr\'echet differentiable mapping and specify $C_t^f$ by a covariance operator $C_0$ and variance schedule $t \mapsto \sigma_t^f \in \mathbb{R}_{> 0 }$ such that $C_t^f = (\sigma_t^f)^2 C_0$. At time $t=0$, we choose to parametrize $\mu_0^f = \mu_0 = \NN(0, C_0)$ as a centered Gaussian measure independent of the function $f \in \FF$. The measure $\mu_0$ will serve as the reference measure in our generative model. In order to satisfy the desiderata of Section \ref{sect:construction_paths}, at time $t=1$ we will choose $m_1^f = f$ and $C_t^f$ to have small operator norm so that $\mu_1^f$ is a Gaussian measure concentrated around $f$. 
    
    In this case, we note that the conditional flow $\phi^f: [0, 1] \times \FF \to \FF$ defined via $\phi_t^f(g) = \sigma_t^f g + m_t^f$  will push $g \sim \NN(0, C_0)$ to the desired conditional measure $\mu_t^f$, i.e. $\mu_t^f = [\phi_t^f]_{\#} \NN(0, C_0)$. Using the flow ODE \eqref{eqn:flow_ode}, we see that a vector field generating this conditional path of measures is
    \begin{equation} \label{eqn:conditional_vector_field}
        v_t^f(g) = \frac{(\sigma_t^f)^\prime}{\sigma_t^f}(g - m_t^f) + \frac{\d}{\d t} m_t^f
    \end{equation}
    where $(\sigma_t^f)^\prime$ is the ordinary time derivative of the variance schedule and $\d / \d t (m_t^f)$ is the Fr\'echet derivative of the mapping $t \mapsto m_t^f$. The proof of this fact is a straightforward generalization of \citet[Theorem 3]{lipman2022flow}, which demonstrates the analogous relationship in finite-dimensional Euclidean spaces.

    In this work, we consider two concrete parameterizations. In the first parametrization (``OT''), the mean and variance are given as affine functions of $t$ and $f$:

    \begin{equation}
        \label{eqn:ot_param}
        m_t^f = tf \qquad \sigma_t^f = 1 - (1 - \sigma_{\min}) t.
    \end{equation}

    The ``OT'' path is named as such as it corresponds to an optimal transport map between Gaussians in the Euclidean setting \citep{lipman2022flow, mccann1997convexity}. 
    
    In the second parametrization (``VP''), we set
    \begin{equation}
        \label{eqn:vp_params}
        m_t^f = \alpha_{1-t} f \qquad \sigma_t^f = \sqrt{1 - \alpha_{1-t}^2}.
    \end{equation}

    This path is inspired inspired by probability paths defined via variance preserving diffusion models \citep{lipman2022flow, song2020score}. We additionally experimented with the ``variance exploding'' parametrization \citep{lipman2022flow, song2020score}, but found empirically that this was not suitable for our setting. See Appendix \ref{appendix:experiment_details} for details. Here, $\sigma_{\min} \in \R_{>0}$ and $\alpha_t \in \R_{>0}$ are hyperparameters of the model controlling the variance of the conditional measures.

\subsection{Absolute Continuity for Gaussians} 
\label{sect:ac_of_mixtures}

    In general, the absolute continuity assumption of Theorem \ref{theorem:gluing_formula} is difficult to satisfy in function spaces. In the Gaussian setting, we may reduce this assumption to assumptions regarding the parametrization of our Gaussian measures. By the Feldman-H\'ajek theorem \citep[Theorem~2.25]{da2014stochastic}, our conditional Gaussian measures $\mu_t^f$ will be mutually absolutely continuous if the difference in means lies in the Cameron-Martin space of $C_t$, i.e. $m_t^f - m_t^g \in C_t^{1/2}(\FF)$. 
    
    Thus, under suitable assumptions on the data distribution $\nu$ and an appropriate parametrization of the conditional means, our marginal vector fields (Equation \eqref{eqn:marginal_vector_field}) will be well-defined as a consequence of \mbox{Theorem \ref{theorem:mutually_ac_mixture}}. Suppose $C_t = \sigma_t^2 C_0$ is a scaled version of some fixed covariance operator $C_0$ with the assumption that $0 < \sigma_t^2 \leq M$ is positive and bounded above. By Lemma 6.15 of \citet{stuart2010inverse}, this choice guarantees us that the Cameron-Martin space is constant in time, i.e. $C_0^{1/2}(\FF) = C_t^{1/2}(\FF)$ for all $t \in [0, 1]$.

    Assume further that the data distribution is supported on the Cameron-Martin space of $C_0$, i.e. ${\nu(C_0^{1/2}(\FF)) = 1}$. In this case, given our covariance parametrization, our Gaussian measures will be mutually absolutely continuous if e.g. $m_t^f$ is an affine function of $f$.  We note that the parametrizations suggested in Section \ref{sect:gaussian_setting} are all affine, and so under the assumption that the data is supported on the Cameron-Martin space $C_0^{1/2}(\FF)$ our setup is well-defined.
    
    In practice, verifying whether the data distribution is supported on $C_0^{1/2}(\FF)$ is difficult. One option to guarantee this assumption is satisfied is to pre-process the data via some mapping ${T: \FF \to C_0^{1/2}(\FF) \subseteq \FF}$ whose image is contained in $C_0^{1/2}(\FF)$. We refer to Appendix C of \citet{lim2023score} for a further discussion of such mappings and related results. We note that in practice, we do not find it necessary to perform this pre-processing.

\subsection{Training the FFM Model}

    Ideally, we would like to perform functional regression on the marginal vector field defined via Equation \eqref{eqn:marginal_vector_field}, where we approximate $v_t(g)$ by a model $u_t(g \mid \theta)$ with parameters $\theta \in \R^p$. This could be achieved, for instance, by minimizing the loss
    \begin{equation}
        \LL(\theta) = \E_{t \sim \UU[0,1], g \sim \mu_t}\left[ \norm{v_t(g) - u_t(g \mid \theta)}^2 \right]
    \end{equation}

    where $\UU[0,1]$ denotes a uniform distribution over the interval $[0,1]$. Note here that our model is a mapping $u: \R^p \times [0, 1] \times \FF \to \FF$, i.e. our model is a parametrized, time-dependent operator on the function space $\FF$. However, such a loss is intractable to compute -- in fact, if we had access to $(v_t)_{t \in [0, 1]}$, there would be no need to learn a model. Consider instead the conditional loss, defined via
    \begin{align}\label{eqn:cond_loss}
        \JJ(\theta)&= \E_{t \sim \UU [0,1], f \sim \nu, g \sim \mu_t^{f} } \left[ \norm{v_t^f(g) - u_t(f \mid \theta)}^2 \right]
    \end{align}

    where, rather than regressing on the intractable $v_t$, we regress on the \emph{known} conditional vector fields $v_t^f$. In the following theorem, we claim that minimizing $\JJ(\theta)$ is equivalent to minimizing $\LL(\theta)$.

\begin{restatable}{theorem}{losstheorem}
    Assume that the true and model vector fields are square-integrable, i.e. $\int_0^1 \int_\FF \norm{v_t(g)}^2 \d \mu_t(g) \d t < \infty$ and $\int_0^1 \int_\FF \norm{u_t(g \mid \theta)}^2 \d \mu_t(g) \d t < \infty$. Then, 
    $\LL(\theta) = \JJ(\theta) + C$ where $C \in \R$ is a constant independent of $\theta$.
\end{restatable}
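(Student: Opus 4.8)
The plan is to expand the squared norms appearing in both objectives and to match them term by term, showing that $\LL$ and $\JJ$ differ only by a quantity independent of $\theta$. Expanding the integrand of $\LL$ gives
\[
\norm{v_t(g) - u_t(g\mid\theta)}^2 = \norm{v_t(g)}^2 - 2\langle v_t(g), u_t(g\mid\theta)\rangle + \norm{u_t(g\mid\theta)}^2,
\]
and the integrand of $\JJ$ admits the identical expansion with $v_t^f$ in place of $v_t$. The terms $\norm{v_t(g)}^2$ and $\norm{v_t^f(g)}^2$ do not involve $\theta$, so I would absorb them into the constant $C$; it then remains to show that the quadratic-in-$u$ terms agree exactly and that the cross terms agree exactly.

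For the quadratic-in-$u$ term, I would invoke the defining property of the marginal measure, Equation \eqref{eqn:marginal_measure}. The standard approximation argument (indicators, then simple functions, then monotone convergence) upgrades this to the identity $\int_\FF h(g)\,\d\mu_t(g) = \int_\FF \int_\FF h(g)\,\d\mu_t^f(g)\,\d\nu(f)$ for any $\mu_t$-integrable $h$. Applying it with $h(g) = \norm{u_t(g\mid\theta)}^2$ immediately shows that the expectation of $\norm{u_t(g\mid\theta)}^2$ under $\mu_t$ coincides with its expectation under the mixture $\mu_t^f\,\d\nu(f)$, so these terms match.

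The crux is the cross term. Here I would substitute the marginal vector field from Equation \eqref{eqn:marginal_vector_field}. Since the inner product against the fixed element $u_t(g\mid\theta)$ is a bounded linear functional on $\FF$, it passes through the Bochner integral, giving $\langle v_t(g), u_t(g\mid\theta)\rangle = \int_\FF \langle v_t^f(g), u_t(g\mid\theta)\rangle \frac{\d\mu_t^f}{\d\mu_t}(g)\,\d\nu(f)$. Integrating against $\mu_t$ and applying Fubini swaps the $\d\mu_t(g)$ and $\d\nu(f)$ integrals, after which the defining property of the Radon-Nikodym derivative converts the inner integral $\int_\FF \langle v_t^f(g), u_t(g\mid\theta)\rangle \frac{\d\mu_t^f}{\d\mu_t}(g)\,\d\mu_t(g)$ into the integral against $\mu_t^f$. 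This is precisely the step that requires the absolute continuity $\mu_t^f \ll \mu_t$ inherited from the hypotheses of Theorem \ref{theorem:gluing_formula}. The result is exactly the conditional cross term, so it too matches. Integrating the matched terms over $t \sim \UU[0,1]$ and collecting the remaining $\theta$-free pieces into $C$ yields $\LL(\theta) = \JJ(\theta) + C$.

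The main obstacle will be the measure-theoretic bookkeeping rather than the algebra: justifying the passage of the inner product through the Bochner integral, the application of Fubini, and the finiteness of each piece. The square-integrability hypotheses on $v_t$ and $u_t$, together with Cauchy-Schwarz, make the cross term absolutely integrable and legitimize these interchanges, while the constant $C = \E_t\,\E_{g\sim\mu_t}\norm{v_t(g)}^2 - \E_t\,\E_{f\sim\nu,\,g\sim\mu_t^f}\norm{v_t^f(g)}^2$ is finite (its first term by assumption, its second whenever $\JJ$ is finite). Crucially, every manipulation lives in the Hilbert space $\FF$ and uses only Bochner integration and Radon-Nikodym derivatives, so the argument transcribes the finite-dimensional Flow Matching identity without ever invoking a density against a Lebesgue-type measure.
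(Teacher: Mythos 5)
Your proposal is correct and follows essentially the same route as the paper's own proof: expand both squared norms, match the quadratic-in-$u$ terms via the mixture structure of $\mu_t$, and match the cross terms by substituting the marginal vector field, commuting the inner product with the Bochner integral, applying Fubini--Tonelli, and using the Radon--Nikodym change of measure. If anything, you are more explicit than the paper about the measure-theoretic justifications (the approximation argument behind the mixture identity, Cauchy--Schwarz for the integrability needed by Fubini, and the explicit form of the constant $C$), which strengthens rather than alters the argument.
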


\begin{figure}[t]
    \centering
    \includegraphics[]{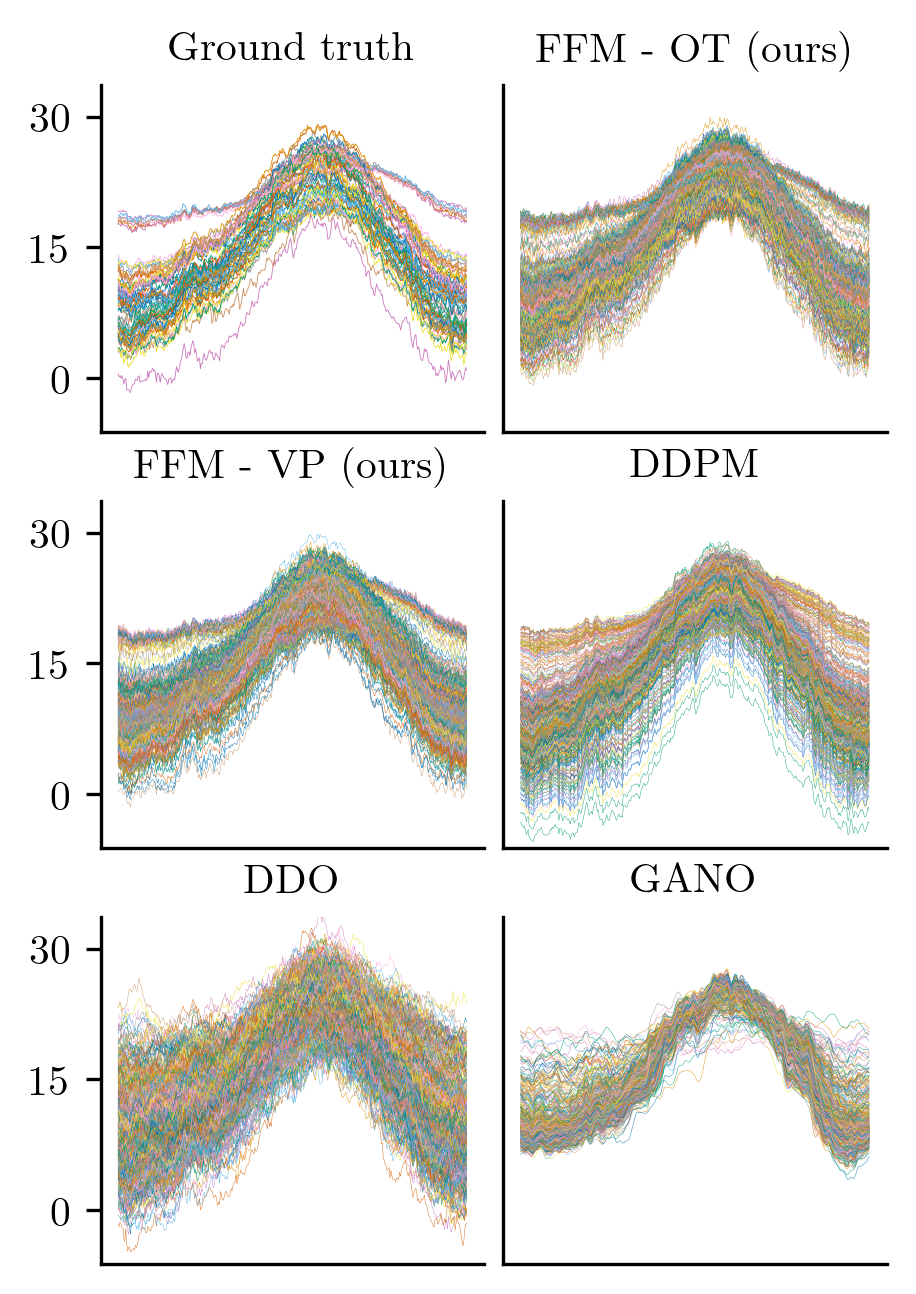}
    \caption{Unconditional generation of 500 samples on the AEMET dataset. Samples from our FFM model and DDPM appear visually to better match the characteristics of the real data relative to DDO and GANO.}
    \label{fig:unconditional_aemet}
\end{figure}

\section{Experiments}
\label{sect:experiments}
    
    We now investigate the empirical performance of FFM on several real-world datasets. In all settings, we assume we are working in the space ${\FF = L^2([0, 1])}$ and we parametrize $u_t(\blank \mid \theta)$ via a Fourier Neural Operator (FNO) \citep{li2020fourier}. Sampling is achieved by drawing a sample from the reference measure $g \sim \mu_0$ and numerically solving the flow ODE (Equation \eqref{eqn:flow_ode}) with initial condition $g$. In our implementation, we use the DOPRI solver \citep{DORMAND198019}. Details can be found in Appendix \ref{appendix:experiment_details}. Code for all of our experiments will be made available upon publication.

    \paragraph{Datasets} Our experiments in 1D include five datasets selected for their diverse correlation structures, exhibiting distinctive patterns that enable visual evaluation of generated samples. Plots of original and generated samples, as well as a detailed description of each dataset, can be found in Appendix \ref{appendix:datasets}. The first dataset (AEMET) consists of a set of 73 curves describing the mean daily temperature at various locations \citep{febrero2012statistical}. The second is a gene expression time series dataset \citep{orlando2008global}, and the remaining three consist of global economic time series on population, GDP per capita, and labor force size \citep{bolt2020maddison, inklaar2018rebasing, IMFlabor}. We also experiment with a dataset of solutions to the Navier-Stokes equation on a 2D torus \citep{li2022learning}.

\begin{table*}[t]
\centering
\caption{Average MSEs between true and generated samples for pointwise statistics on five 1D datasets, along with the standard deviation across ten random seeds. The average number of function evaluations (NFEs) for each sampling procedure in our implementation is also reported. Our FFM models obtain the best average performance across nearly all metrics, while simultaneously requiring fewer NFEs than the diffusion baselines.}
\resizebox{\textwidth}{!}{%
\notsotiny
\begin{tabular}{c|lcccccc}
\toprule
 && Mean & Variance & Skewness & Kurtosis & Autocorrelation & NFEs \\
\midrule
\parbox[t]{2mm}{\multirow{5}{*}{\rotatebox[origin=c]{90}{\textbf{AEMET}}}} & FFM-OT (ours) & \textbf{8.4e-2} (9.9e-2) & 1.7e+0 (1.1e+0) & 7.7e-2 (6.6e-2) & 3.3e-2 (3.7e-2) & \textbf{3.0e-6} (4.0e-6) & 668 \\
&FFM-VP (ours) & 1.3e-1 (1.4e-1) & \textbf{1.5e+0} (1.2e+0) & \textbf{5.e-2} (4.3e-2) & \textbf{1.7e-2} (1.6e-2) & 6.0e-6 (7.0e-6) & 488 \\
&DDPM          & 3.e-1 (3.0e-1) & 3.5e+0 (4.6e+0) & 2.2e-1 (2.2e-1) & 4.8e-2 (3.7e-2) & 1.2e-5 (9.e-6) & 1000\\ 
&DDO           & 2.4e-1 (1.4e-1) & 6.6e+0 (5.1e+0) & 2.1e-1 (4.1e-2) & 3.8e-2 (1.3e-2) & 6.7e-4 (1.3e-4) & 2000\\ 
&GANO          & 6.5e+1 (1.9e+2) & 3.7e+1 (4.0e+1) & 2.9e+0 (4.8e+0) & 3.3e-1 (4.0e-1) & 1.2e-3 (3.1e-3) & 1\\ 
\midrule
\parbox[t]{2mm}{\multirow{5}{*}{\rotatebox[origin=c]{90}{\textbf{Genes}}}} & FFM-OT (ours) & 6.7e-4 (4.5e-4) & 3.9e-3 (2.6e-4) & 2.4e-1 (4.7e-2) & 7.7e-2 (9.0e-3) & 2.5e-4 (1.7e-4) & 386
\\&FFM-VP (ours) & \textbf{4.2e-4} (3.8e-4) & \textbf{7.3e-4} (3.5e-4) & \textbf{1.9e-1} (6.1e-2) & \textbf{4.3e-2} (1.1e-2) & \textbf{1.3e-4} (1.0e-4) & 290\\
&DDPM          & 8.8e-4 (4.5e-4) & 1.9e-3 (4.2e-4) & 3.6e-1 (1.9e-1) & 6.3e-2 (1.1e-2) & 4.3e-4 (9.3e-5) & 1000\\ 
&DDO           & 4.2e-3 (1.5e-3) & 1.2e-3 (3.6e-4) & 3.0e-1 (5.7e-2) & 1.1e-1 (1.1e-2) & 1.0e-3 (1.7e-4) & 2000\\ 
&GANO          & 4.6e-3 (2.0e-3) & 7.4e-3 (1.5e-3) & 1.7e+0 (1.3e+0) & 3.3e-1 (8.4e-2) & 2.e-3 (1.0e-3) & 1 \\ 
\midrule
\parbox[t]{2mm}{\multirow{5}{*}{\rotatebox[origin=c]{90}{\textbf{Pop.}}}} & FFM-OT (ours) & \textbf{3.9e-5} (3.8e-5) & \textbf{7.0e-6} (9.e-6) & 4.1e+0 (5.3e+0) & 9.0e-2 (1.0e-1) & \textbf{2.7e-5} (4.6e-5) & 662\\
&FFM-VP (ours) & 6.3e-5 (4.5e-5) & \textbf{7.0e-6} (7.e-6) & \textbf{1.3e+0} (6.1e-1) & 7.8e-2 (4.5e-2) & 2.5e-3 (5.2e-4) & 494 \\
&DDPM          & 5.7e-5 (5.2e-5) & 6.0e-6 (7.0e-6) & 1.9e+0 (1.2e+0) & \textbf{5.9e-2} (4.4e-2) & 5.6e-5 (3.5e-5) & 1000\\ 
&DDO           & 1.9e-4 (8.7e-5) & 2.7e-4 (1.9e-5) & 4.2e+0 (4.1e-1) & 2.7e-1 (3.7e-2) & 3.2e-2 (1.9e-3) & 2000\\ 
&GANO          & 1.1e-3 (9.8e-4) & 4.3e-5 (7.1e-5) & 8.e+0 (2.4e+0) & 8.6e-1 (5.3e-1) & 1.6e-3 (3.6e-3) & 1\\ 
\midrule
\parbox[t]{2mm}{\multirow{5}{*}{\rotatebox[origin=c]{90}{\textbf{GDP}}}} & FFM-OT (ours) & \textbf{2.0e-5} (1.2e-5) & 9.e-6 (6.e-6) & 6.3e-1 (3.5e-1) & \textbf{3.9e-2} (1.9e-2) & \textbf{2.8e-5} (1.4e-5) & 536\\
&FFM-VP (ours) & 4.1e-5 (2.1e-5) & \textbf{8.0e-6} (7.0e-6) & \textbf{6.2e-1} (4.1e-1) & 5.0e-2 (2.5e-2) & 1.9e-4 (2.3e-5) & 494\\
&DDPM          & 1.6e-4 (1.5e-4) & 2.5e-5 (2.9e-5) & 8.6e-1 (5.9e-1) & 5.1e-2 (2.1e-2) & 1.4e-4 (1.0e-4) & 1000 \\ 
&DDO           & 2.1e-4 (1.1e-4) & 2.9e-4 (9.4e-5) & 1.7e+0 (1.1e-1) & 2.7e-1 (2.4e-2) & 9.6e-3 (1.5e-3) & 2000 \\ 
&GANO          & 8.4e-4 (7.8e-4) & 5.0e-5 (3.7e-5) & 2.6e+0 (1.3e+0) & 2.1e-1 (1.4e-1) & 1.6e-4 (1.6e-4) & 1\\ 
\midrule
\parbox[t]{2mm}{\multirow{5}{*}{\rotatebox[origin=c]{90}{\textbf{Labor}}}} & FFM-OT (ours) &\textbf{6.9e-5} (6.1e-5) & 2.6e-5 (1.1e-5) & 5.4e+0 (3.3e+0) & 1.5e-1 (1.8e-1) & \textbf{1.3e-4} (7.5e-5) & 308\\
&FFM-VP (ours) & 7.1e-5 (5.5e-5) & \textbf{2.1e-5} (9.0e-6) & \textbf{2.0e+0} (1.5e+0) & \textbf{8.6e-2} (7.3e-2) & 5.8e-4 (1.4e-4) & 302\\
&DDPM          & 4.2e-4 (3.3e-4) & 3.5e-4 (5.6e-4) & 1.8e+3 (3.5e+3) & 1.0e+1 (1.5e+1) & 2.9e-4 (1.6e-4) & 1000\\ 
&DDO           & 3.1e-4 (1.9e-4) & 4.0e-4 (1.2e-4) & 4.8e+0 (5.3e-1) & 4.3e-1 (3.9e-2) & 7.8e-3 (1.2e-3) & 2000 \\ 
&GANO          & 3.2e-3 (6.3e-3) & 6.5e-4 (4.6e-4) & 7.8e+0 (7.6e+0) & 1.2e+0 (3.7e-1) & 1.8e-3 (9.4e-4) & 1\\ 

\bottomrule
\end{tabular}
}
\label{tab:pointwise_mses_aemet}
\end{table*}

    \paragraph{Baselines}
    We compare against several functional generative models: the Denoising Diffusion Operator (DDO) \citep{lim2023score} with NCSN noise scale, GANO \citep{rahman2022generative}, and functional DDPM \citep{kerrigan2022diffusion}. We do not compare to non-functional methods, as we are primarily interested in developing discretization-invariant generative models. 
    
    All noise was specified via a Gaussian process with a tuned Matérn kernel. For the sake of a fair comparison, we used the same architecture for all models, with the exception of GANO which requires a generator and discriminator pair. We used the code provided by the authors of DDPM and GANO but re-implemented the DDO model. For all models, we performed extensive hyperparameter tuning and report the best results. Generally, we find the FFM methods are less sensitive to hyperparameter choices than the baseline methods.

    \paragraph{Results}

    Figure \ref{fig:ns_samples} shows samples from the Navier-Stokes dataset and samples generated from the various models we consider. Qualitatively, our FFM model and the DDPM model match the ground-truth samples, whereas DDO and GANO suffer from mode collapse.

    \begin{figure}[t]
    \centering
    \includegraphics[]{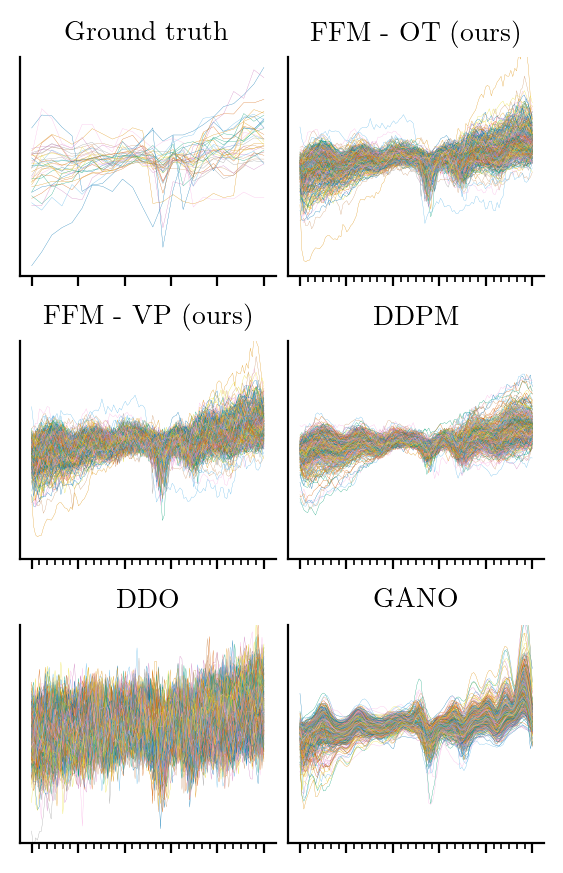}
    \caption{Samples from the Labor dataset and samples from the various models at 5x super-resolution.}
    \label{fig:genes_superres_corr}
    \end{figure}
    
    Figure \ref{fig:unconditional_aemet} shows samples from the AEMET dataset and generated samples. Our FFM model is able to qualitatively match the samples from the ground truth distribution. The DDPM samples are similar in quality, but do not respect the range of values seen in the data. For DDO, we observe smoothness issues, and for GANO, we again see mode collapse issues.
    
    Quantitatively, Table \ref{tab:pointwise_mses_aemet} evaluates model performance on the 1D datasets by computing pointwise statistics of the generated functions and computing the MSE between these pointwise statistics and those of the real data. Table \ref{tab:ns_results} reports the MSE between the density and spectra \citep{lim2023energy} of the real and generated samples on the Navier-Stokes dataset. See Appendix \ref{appendix:additional_results} for visualizations. Variants of FFM perform the best, on average, in almost all metrics considered across the wide range of domains on which we performed evaluation. While pointwise statistics have limitations, for functional models there are no clear alternatives for evaluation, and pointwise metrics are broadly used in the literature \citep{rahman2022generative, lim2023score}. Together with the qualitative results, these metrics further validate the performance of our method.

     A key benefit of the FNO architecture is the ability to perform generation at arbitrary resolutions, a necessary component in any functional task. We demonstrate this on the Labor dataset in Figure \ref{fig:genes_superres_corr}. All models are trained on the original data resolution, but samples are drawn at a five times greater resolution. Samples from FFM and DDPM qualitatively match the characteristics of the ground truth distribution, whereas samples from DDO and GANO do not match the smoothness of the original data. See Appendix \ref{appendix:additional_results} for further evalution. 
    
\paragraph{Conditional Generation}

    We also demonstrate an extension of our method for conditional tasks, such as interpolating (or extrapolating) a finite set of given observations. We explore two approaches: conditional training and a modified sampling process inspired by ILVR \citep{choi2021ilvr}. We note alternative conditional methods \citep{mathieu2023geometric} are readily applicable as well. In Figure \ref{fig:conditional}, we demonstrate these two approaches. See Appendix \ref{appendix:conditional_models} for details.

\section{Conclusion}

    We introduce Functional Flow Matching (FFM), a continuous-time normalizing flow model which allows us to model infinite-dimensional distributions. We demonstrate that FFM is able to outperform several recently proposed function-space generative models in terms of qualitative samples and pointwise metrics on a diverse set of benchmarks. Our work builds the foundations for function-space normalizing flows, and our hope is that future work may build on these foundations. In terms of limitations, FFM is implemented via the FNO \citep{li2020fourier}, which can only handle data observed on uniform grids. Exploring architectures which alleviate this assumption may increase the applicability of our methods. Additionally, there are no established benchmarks for functional generation, unlike FID \citep{heusel2017gans} for images. Developing benchmarks for these tasks is critical for future work.
    
    \begin{figure}
    \includegraphics[width=\halfwidth]{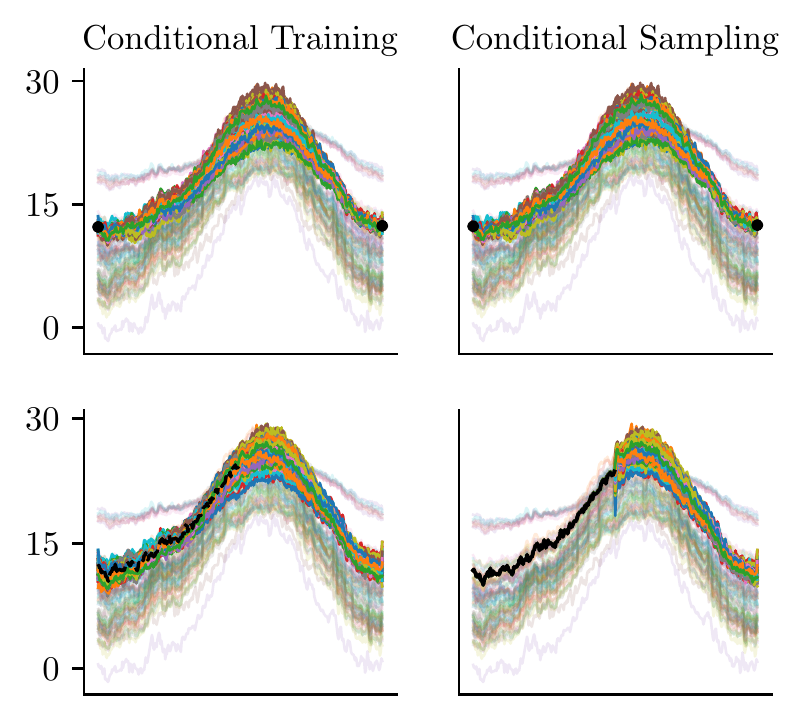}
    \caption{ \small Conditional samples from the FFM-OT model. Darker curves indicate samples and lighter curves depict real data. Conditioning information is shown in black. The first column corresponds to a conditionally trained model and the second column corresponds to a conditionally trained model in addition to conditional sampling. We see that, while the conditionally trained model takes into account the conditioning information, the conditional sampling method allows us to enforce equality of the generated samples to the conditioning information at the observation locations. }
    \label{fig:conditional}
    \end{figure}

    \paragraph{Acknowledgements.}  
    This research was supported in part by the Hasso Plattner Institute (HPI) Research Center in Machine Learning and Data Science at the University of California, Irvine, by
    NSF under awards 1900644 and 1927245, and
    by the National Institute of Health under awards R01-AG065330-02S1 and R01-LM013344.
    
\bibliographystyle{plainnat}
\bibliography{refs}

\setcounter{section}{0}
\renewcommand{\thesection}{A.\arabic{section}}
\renewcommand\theHsection{A.\thesection}

\onecolumn
\aistatstitle{Functional Flow Matching: Appendix}

\section{Proofs of Theorems}
\label{appendix:proofs}

\gluing*
\begin{proof}
    In this proof, we denote the variable of integration for integrals over $\mathcal{F}$ via a subscript on the integral. We show that for an arbitrary but fixed test function $\varphi$,
    
    \begin{align}
        \int_0^1 \int_g \partial_t \varphi(g, t) \d \mu_t(g) \d t = - \int_0^1 \int_g \langle v_t(g), \nabla_g \varphi(g, t) \d\mu_t(g) \d t.
    \end{align}
    
    To that end, we begin by analyzing the left-hand side, replacing the integration of the marginal measure $\mu_t$ with a double integral over its components:

    \begin{equation}
        \int_0^1 \int_g \partial_t \varphi(g, t) \d \mu_t(g) \d t = \int_0^1 \int_f \int_g \partial_t \varphi(g, t) \d \mu_t^{f}(g) \d \nu(f) \d t
    \end{equation}
    
    By Fubini-Tonelli and using the assumption that $v_t^{f}$ generates $\mu_t^{f}$, we obtain via the continuity equation for $(v_t^f, \mu_t^f)$:
    
    \begin{equation}
        =  - \int_f \int_0^1 \int_g \langle v_t^{f}(g), \nabla_g \varphi(g, t) \rangle \d \mu_t^{f}(g) \d t \d \nu(f) 
    \end{equation}
    
    Using our absolute continuity assumption and Fubini-Tonelli once again, we perform a change of measure to obtain
    
    \begin{align}
        &=  - \int_0^1 \int_f \int_g \langle v_t^{f}(g), \nabla_g \varphi(g, t) \rangle \left(\frac{\d \mu_t^{f} }{\d \mu_t}(g) \right) \d \mu_t(g) \d \nu(f) \d t \\
        &=  - \int_0^1  \int_f \int_g \langle v_t^{f}(g) \frac{\d \mu_t^{f} }{\d \mu_t}(g), \nabla_g \varphi(g, t) \rangle \d \mu_t (g) \d \nu(f) \d t 
    \end{align}

    Using the fact that Bochner integrals commute with inner products, an application of Fubini-Tonelli yields
    
    \begin{align}
        &=  - \int_0^1 \int_g \left\langle \int_f v_t^{f}(g) \frac{\d \mu_t^{f}}{\d \mu_t}(g) \d \nu(f), \nabla_f \varphi(f, t) \right\rangle \d \mu_t(g)\d t \\
        &=- \int_0^1 \int_g \left\langle v_t(g), \nabla_f \varphi(f, t) \right\rangle \d \mu_t(g)\d t
    \end{align}
    
    Hence, we have shown that the vector field generating $\mu_t$ is given by
    
    \begin{equation}
        v_t(g) = \int_f v_t^{f}(g) \frac{\d \mu_t^{f}}{\d \mu_t}(g) \d \nu(f).
    \end{equation}
\end{proof}

\mutuallyac*
\begin{proof}
    By assumption, there exists $\GG \subseteq \FF$ with $\nu(G) = 1$ and for any $f, g \in \GG$, we have $\mu_t^f \ll \mu_t^g$ and $\mu_t^g \ll \mu_t^f$. Fix $A \in \BB(\FF)$ with $\mu_t(A) = 0$. We claim that $\mu^f(A) = 0$ for every $f \in \GG$. Note that as $\GG \subseteq \FF$ has full measure, the integral defining $\mu_T$ may be taken over $\GG$ rather than $\FF$. Suppose for the sake of contradiction that $\mu_t^f(A) > 0$ for some $f \in \GG$. From the mutual equivalencies of the measures parametrized by $\GG$, it follows that $\mu_t^g(A) > 0$ for every $g \in \GG$. Given the form of the mixture measure $\mu_t$, it would then follow that $\mu_t(A) > 0$, which is a contradiction. Thus, $\mu_t^f \ll \mu_t$ for $\nu$-a.e. $f$ as claimed.
\end{proof}

\losstheorem*
\begin{proof}
First, note that since we are working in a real Hilbert space, for fixed $f, g \in \FF$ we have
\begin{align}
    \norm{v_t(g) - u_t(g \mid \theta)}^2 &= \langle v_t(g) - u_t(g \mid \theta), v_t(g) - u_t(g \mid \theta) \rangle \\
    &= \norm{v_t(g)}^2 + \norm{u_t(g \mid \theta)}^2 - 2 \langle v_t(g), u_t(g \mid \theta) \rangle
\end{align}
and similarly,
\begin{align}
    \norm{v_t^f(g) - u_t(g \mid \theta)}^2 = \norm{v_t^f(g)}^2 + \norm{u_t(g \mid \theta)}^2 - 2 \langle v_t(g), u_t(g \mid \theta) \rangle.
\end{align}

The first term in both is independent of the model parameters $\theta$. We analyze the remaining two terms. Below, we use subscripts on integrals over $\mathcal{F}$ to denote the variable of integration.

First, using the fact that $\mu_t$ is a mixture measure, 
\begin{align}
    \E_{t, \mu_t} &\left[ \norm{u_t(g \mid \theta)}^2 \right] = \\
    &=\int_0^1 \int_g \norm{u_t(g\mid\theta)}^2 \d \mu_t(g) \d t \\ 
    &= \int_0^1 \int_f \int_g \norm{u_t(g \mid \theta)}^2 \d \mu_t^f(g) \d \nu(f) \d t \\
    &= \E_{t, g \sim \mu_t^f, f \sim \nu} \left[ \norm{u_t(g \mid \theta)}^2 \right].
\end{align}

Next, using the exchangeability between Bochner integrals and inner products and Fubini-Tonelli,
\begin{align}
    &\E_{t, g \sim \mu_t} \left[ \langle v_t(g), u_t(g \mid \theta) \right] = \\
    &=\int_0^1 \int_g \langle v_t(g), u_t(g \mid \theta) \rangle \d \mu_t(g) \d t \\
    &= \int_0^1 \int_g \left\langle \int_f v_t^f(g) \frac{\d \mu_t^f}{\d \mu_t}(g) \d \nu(f), u_t(g \mid \theta) \right\rangle \d \mu_t(g) \d t \\ 
    &= \int_0^1 \int_f \int_g \langle v_t^f(g), u_t(g \mid \theta) \rangle \left(\frac{d \mu_t^f}{d \mu_t}(g)\right)  \d \mu_t(g) \d \nu(f) \d t \\
    &= \int_0^1 \int_f \int_g \langle v_t^f(g), u_t(g \mid \theta) \d \mu_t^f(g) \d\nu(f) \d t \\
    &= \E_{t, f \sim \nu, g \sim \mu_t^f} \left[ \langle v_t^f(g), u_t(g \mid \theta) \rangle \right].
\end{align}

This shows the equivalency of the two losses.
\end{proof}

\section{Experiment Details}
\label{appendix:experiment_details}

\subsection{Parametrizations}

The FM-OT and FM-VP model require specifying a variance schedule via the hyperparameter $\sigma_t^f$. In this work, we parametrize FFM-OT by setting $\sigma_{\min} = 1\mathrm{e}{-4}$. For FFM-VP, we set $\alpha_{t} = \cos \left( \frac{t + s}{1 + s} \frac{\pi}{2}\right)$ where $s = 0.08$, following a formulation similar to the cosine schedule introduced by \citet{nichol2021improved}.

Model-specific hyperparameters have been extensively fine-tuned via grid search, and we found the following parametrizations to consistently perform optimally across several domains:
\begin{itemize}
    \item DDPM: the noise schedule, following the notation of \cite{kerrigan2022diffusion}, is set to linearly interpolate between $\beta_0 = 1e-4$ and $\beta_T = 0.02$ in $T=1000$ timesteps. The code for this implementation was taken directly from the official repository\footnote{https://github.com/GavinKerrigan/functional\_diffusion}.
    \item DDO: following the notation of \cite{lim2023score}, we set the time interval to $T=10$, and the noise schedule geometrically interpolates between $\sigma_{10} = 1e-3$ and $\sigma_1 = 1$ on the 1D datasets and $\sigma_{10} = 1e-2$ and $\sigma_1 = 100$ on the 2D datasets. Sampling is performed by running their annealed Langevin dynamics algorithm with  $\epsilon = 2 \times 10^{-5}$ and $M=200$.
    \item GANO: the generator is trained every 5 epochs, and gradient penalty set to $\lambda=0.1$ in 1D and $\lambda=10$ in 2D. \citep{rahman2022generative}. The code for this implementation was taken directly from the official repository\footnote{https://github.com/neuraloperator/GANO}.
\end{itemize}

\paragraph{Model Architectures}
For FFM, DDPM, and DDO, the architecture used is the FNO implemented in the \texttt{neuraloperator} package \citep{li2020fourier, kovachki2021neural}. For GANO, we directly use the FNO-based model architectures for both the discriminator and generator implemented by \citet{rahman2022generative} for the 2D dataset, while for the 1D datasets we use the same FNO architecture as the other methods.

\paragraph{Gaussian Measures}
Each model experimented with relies on noise sampled from a Gaussian measure. In our work, we consider a mean-zero Gaussian process (GP) parametrized by a Matérn kernel with $\nu = 1/2$. In 1D, the kernel hyperparameters are set to have a variance $\sigma^2 = 0.1$ and length scale $\ell = 1e-2$. In 2D, the variance is $\sigma^2 = 1$ and the length scale was set to $\ell=1e-3$ for DDO and GANO, and $\ell=1e-2$ for FFM and DDPM.

\paragraph{Training}
All models are trained using the Adam optimizer. In 1D, we use an initial learning rate of $1\mathrm{e}{-3}$, scheduled to decrease by one order of magnitude after 50 epochs for all datasets but AEMET, where it is decreased every 25 epochs. In 2D, we use an initial learning rate of $5e-4$ for FFM, DDPM, and DDO, and an initial learning rate of $1e-4$ for GANO, and this initial learning rate was decayed by one order of magnitude every 25 epochs. 

\subsection{Dataset Details}
\label{appendix:datasets}

\paragraph{Navier-Stokes.} This dataset consists of solutions to the Navier-Stokes equations on a 2D torus at a resolution of 64x64. For the sake of efficient training, we randomly selected 20,000 datapoints for training from the original dataset \citep{li2022learning} as there is a high degree of redundancy in the data. For FFM, DDPM, and DNO, we use 4 Fourier layers of 32 modes and 64 hidden channels, 256 lifting channels, 256 projection channels, and the GeLU activation function \citep{hendrycks2016gaussian}. For GANO, we use 32 modes and set the number of hidden channels to 16 due to memory constraints. All models were trained for 300 epochs at a batch size of 128.

\paragraph{AEMET dataset.} This dataset consists of a set of functions describing the mean curve of the average daily temperature (in Celsius) for the period 1980-2009 recorded by 73 weather stations in Spain \citep{febrero2012statistical}. Each function is observed on a uniform grid at a resolution of 365. The neural architecture we use for this dataset is an FNO with a width of 256 and 64 modes, kept constant for all models considered in this experiment. The model is trained for 50 epochs, with batch size set to 73. 

\paragraph{Gene expression.} The original dataset consists of 10,928 time series at 20 uniformly spaced time points, recording the amplitude of gene expression for 4 different genes. The genes are concatenated to create the visual effect of spikes occurring periodically in time, while maintaining the structure of the original dataset. The data was log-transformed and centered before being fed to the model. We restrict our focus to a subset of 156 functions exhibiting large gene expression, determined by the standard deviation averaged across time for each centered function being greater than 0.3. For this dataset, we use an FNO with a width of 256 and 16 modes across all models. The model is trained for 200 epochs, with batch size set to 16.

\paragraph{Economic datasets.} The first two datasets are taken from the Maddison Project database \citep{bolt2020maddison, inklaar2018rebasing}, and the third from the IMF \citep{IMFlabor}. The datasets were picked specifically for their distinct visual characteristics, explored in greater detail in Appendix \ref{appendix:additional_results}.

\begin{itemize}
    \item \textit{Population}: time series of the evolution of the population for 169 countries across the globe from the year 1950 to 2018 (that is, discretized at 69 points in time). For a clearer visual representation, each time series was divided by its mean, so each curve represents the population for each country, relative to the mean population for that country over the 69 years under consideration. The functions in this dataset exhibit linear growth over time, with a change point shared across observations.
    \item \textit{GDP}: time series representing the evolution of GDP per capita from 1950 to 2018. The original dataset consists of 169 countries, but time series presenting missing values were removed yielding 145 observations. The same preprocessing as that described above was applied to the data. While the functions seem to exhibit the same change point as that observed for the population datasets, the growth over time is noisier and exhibits irregular patterns.
    \item \textit{Labor}: size of labor force per quarter between Q1 2017 and Q4 2022 (for a total of 24 points in time), for a subset of 35 countries (obtained removing those with missing values from the original 105 observations).  The same preprocessing as that described for the population dataset was applied to the data. This dataset tests the ability of the generative models to learn from small and multimodal data.
\end{itemize}
The models for the population and GDP datasets have width set to 256 and 32 modes, while the one for the labor dataset has width set to 128 and 8 modes. All models were trained for 100 epochs, with a batch size of 16. 

\subsection{Sampling Details}

We use the \texttt{torchdiffeq} \citep{torchdiffeq} package for all ODE solvers. The specific solver we use is \texttt{dopri5}, an implementation of the Dormand-Prince method \citep{DORMAND198019} of order 5. 
We set the absolute and relative tolerance parameters to $1\mathrm{e}{-10}$ for the 1D datasets and $1\mathrm{e}{-5}$ for the 2D datasets. Note that setting such a tolerance gives us an explicit way of trading off sample quality for sampling efficiency.

\section{Additional Experimental Results}
\label{appendix:additional_results}

This section contains several additional experimental results and figures. First, in Section \ref{appendix:addl_mogp} we explore an additional synthetic dataset consisting of a mixture of Gaussian processes (MoGP). Next, we provide additional results on super-resolution in Section \ref{appendix:superres}. Sections \ref{appendix:addl_1d_results} and \ref{appendix:addl_ns} provide further visualizations of our results on the 1D and 2D datasets considered in the main paper respectively.

\subsection{Additional Results: MoGP}
\label{appendix:addl_mogp}

\paragraph{Mixture of Gaussian Processes (MoGP) Dataset.}
The experiment considers the task of generating samples from a mixture of two GPs. The two components, with equal weights, have mean functions $m_1 = 10x - 5$ and $m_2 = -10x + 5 $, and a squared-exponential kernel with variance $\sigma^2 = 0.04$ and length scale $\ell = 0.1$. The synthetic samples used for training are observed on a uniform grid at a resolution of 64 on the interval $[0,1]$. All models were trained on the same sample of $N=5000$ realizations of the mixture of GPs. Figure \ref{fig:appendix_mogp_detailed} illustrates a visual comparison of 500 samples from each model, while Table \ref{tab:pointwise_mses_mogp} presents a quantitative comparison of the mean squared error (MSE) on various pointwise statistics. Additionally, Figure \ref{fig:appendix_mogp_detailed} provides a comprehensive depiction of the variations in these pointwise statistics across different models. 

\begin{table*}[!ht]
    \centering
    \caption{  
    Average MSEs between true and generated samples for various pointwise statistics on the MoGP dataset, along with the standard deviation (across ten random seeds). The average number of function evaluations (NFEs) for each model is also reported. Variants of our proposed FFM model obtain the best or second best average performance across all metrics. DDPM outperforms FFM in terms of variance, but only by a small margin.}
    \resizebox{\textwidth}{!}{%
    \tiny
    \begin{tabular}{lccccccc}
    \toprule
     & Mean & Variance & Skewness & Kurtosis & Autocorrelation & NFEs\\
    \midrule
    FFM-OT (ours)                       & \textbf{2.2e-2} (3.e-2) & 2.9e-1 (3.2e-1) & 1.6e-2 (1.1e-2) & \textbf{1.1e-2} (1.2e-2) & \textbf{7.e-6} (6.e-6) & 740\\
    FFM-VP (ours)                       & 3.9e-2 (3.6e-2) & 3.6e-1 (5.6e-1) & \textbf{1.4e-2} (5.2e-3) & 1.5e-2 (1.2e-2) & 8.e-6 (8.e-6) & 716\\
    DDPM                               & 3.0e-2 (2.4e-2) & \textbf{1.4e-1 }(1.9e-1) & 1.5e-2 (9.6e-3) & 1.2e-2 (8.1e-3) & 1.9e-5 (2.2e-5) & 1000\\ 
    DDO                               & 7.3e-1 (9.6e-1) & 2.7e+0 (5.3e+0) & 4.2e-1 (8.7e-1) & 2.8e-1 (3.9e-1) & 1.3e-5 (8.e-6) & 2000\\ 
    GANO                               & 1.9e-1 (1.6e-1) & 8.1e+0 (6.0e+0) & 3.4e-1 (2.5e-1) & 4.6e-2 (3.9e-2) & 6.2e-4 (6.7e-4) & 1
  \\ 
    \bottomrule
    \end{tabular}
    }
    \label{tab:pointwise_mses_mogp}
\end{table*}

\begin{figure*}[!ht]%
    \centering
    \subfloat{\includegraphics[width= 0.2\textwidth]{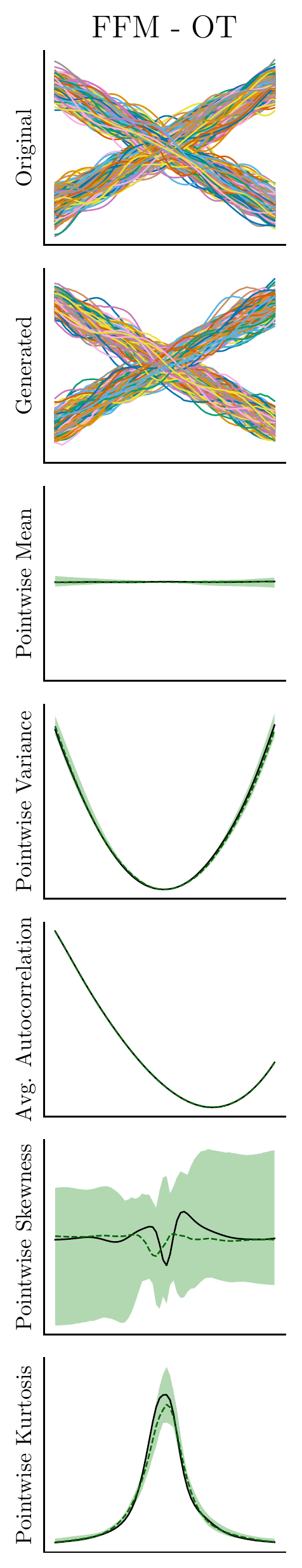}}%
    \subfloat{\includegraphics[width= 0.2\textwidth]{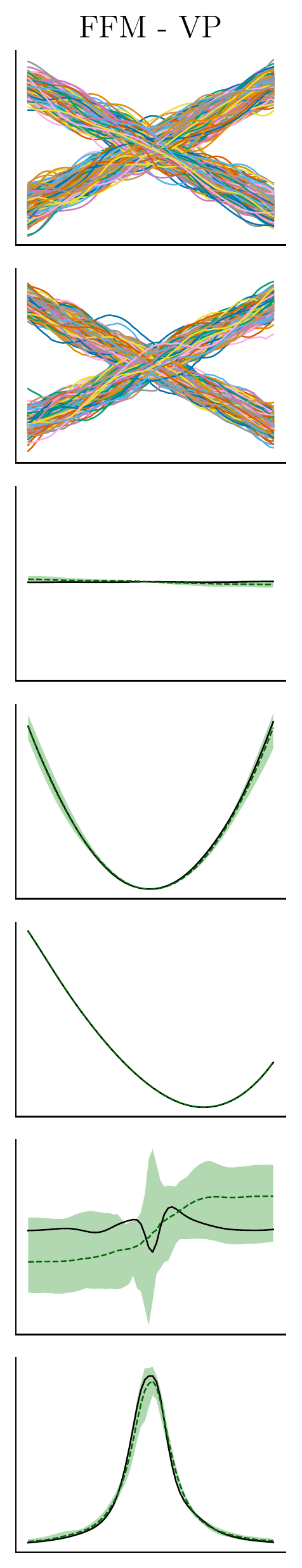}}%
    \subfloat{\includegraphics[width= 0.2\textwidth]{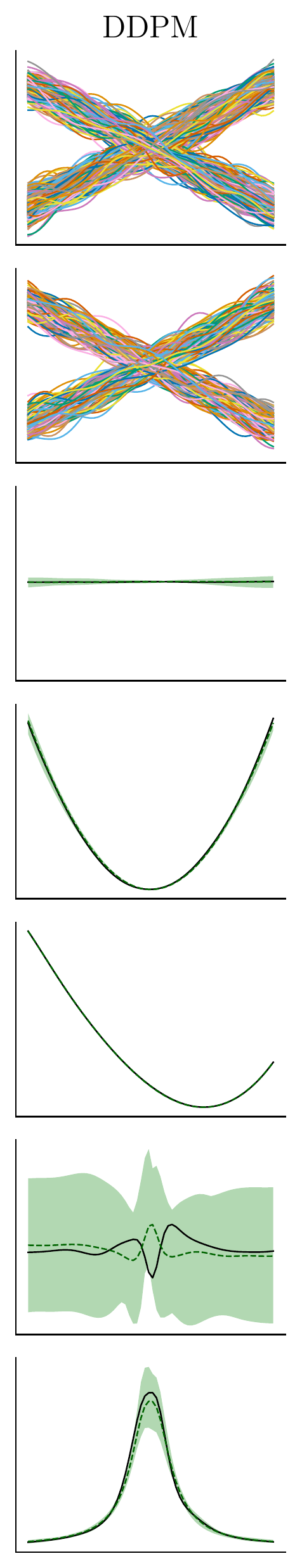}}%
    \subfloat{\includegraphics[width= 0.2\textwidth]{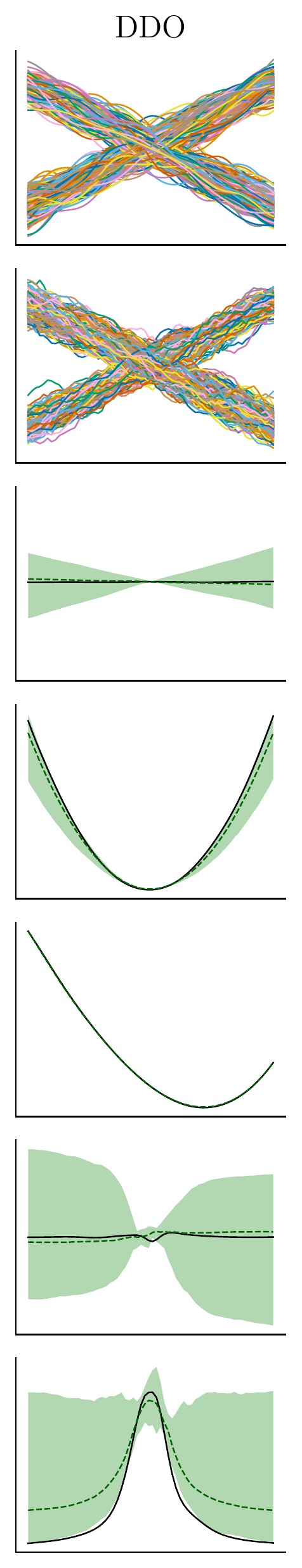}}%
    \subfloat{\includegraphics[width= 0.2\textwidth]{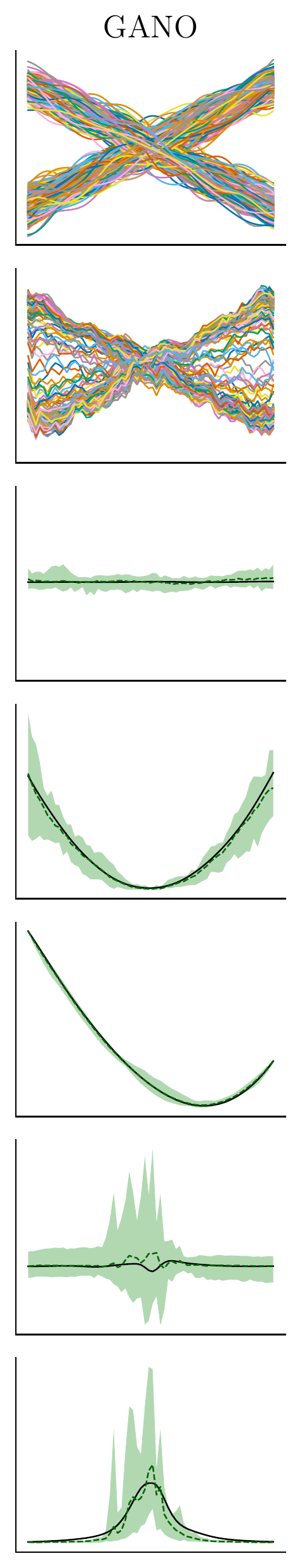}}%
    \caption{Various pointwise statistics on the MoGP dataset. Curves in black indicate the corresponding pointwise statistic for the original dataset (top row). The green error bands represent the minimal and maximal value of the pointwise statistic from 500 samples across ten random seeds of the corresponding model. The dashed green lines indicate the mean pointwise statistic across these ten runs for each model. See Table \ref{tab:pointwise_mses_mogp} for a quantitative comparison.}%
    \label{fig:appendix_mogp_detailed}%
\end{figure*}

\clearpage

\subsection{Additional Results: Super-Resolution}
\label{appendix:superres}

    Here, we provide additional visualizations regarding the ability of all models considered to perform super-resolution, i.e. to sample at a resolution greater than the training dataset. All models considered are trained at the original dataset resolution, but due to the neural operator architectures being used, we may sample at arbitrary resolutions. We note that quantitatively evaluating these super-resolved samples is difficult as we do not have access to a notion of higher-resolution ground truth here. 
    
    Figure \ref{fig:genes_superres} shows samples from the original Gene expression time series dataset \citep{orlando2008global} at a resolution of 20, as well as samples from each model at a 5x resolution, i.e. a resolution of 100. We see that, qualitatively, samples from FFM, DDPM, and GANO resemble those of the original dataset. The samples from DDO appear overly rough, and the samples from GANO are smoother than those from FFM and DDPM. 
    
    Figure \ref{fig:econ_superres} shows qualitatively similar results on the econometrics datasets \citep{bolt2020maddison, inklaar2018rebasing, IMFlabor}, with the exception of FFM-VP generating samples that are rougher than the original data. To further explore the quality of these super-resolved samples, we additionally provide the correlation matrices of the original data and super-resolved samples in Figure \ref{fig:econ_superres_corr}. We generally see that FFM-OT, FFM-VP, DDPM, and GANO are able to qualitatively capture the original correlation structures, whereas DDO fails to do so. We additionally note that on the Population and GPD datasets, FFM-VP, DDO, and GANO display a consistent strong diagonal band, indicating that these models generate samples at a variance which is too large when super-resolved. All models display this failure mode on the Labor dataset.

    \begin{figure*}[!ht]
        \centering
        \includegraphics[width=\textwidth]{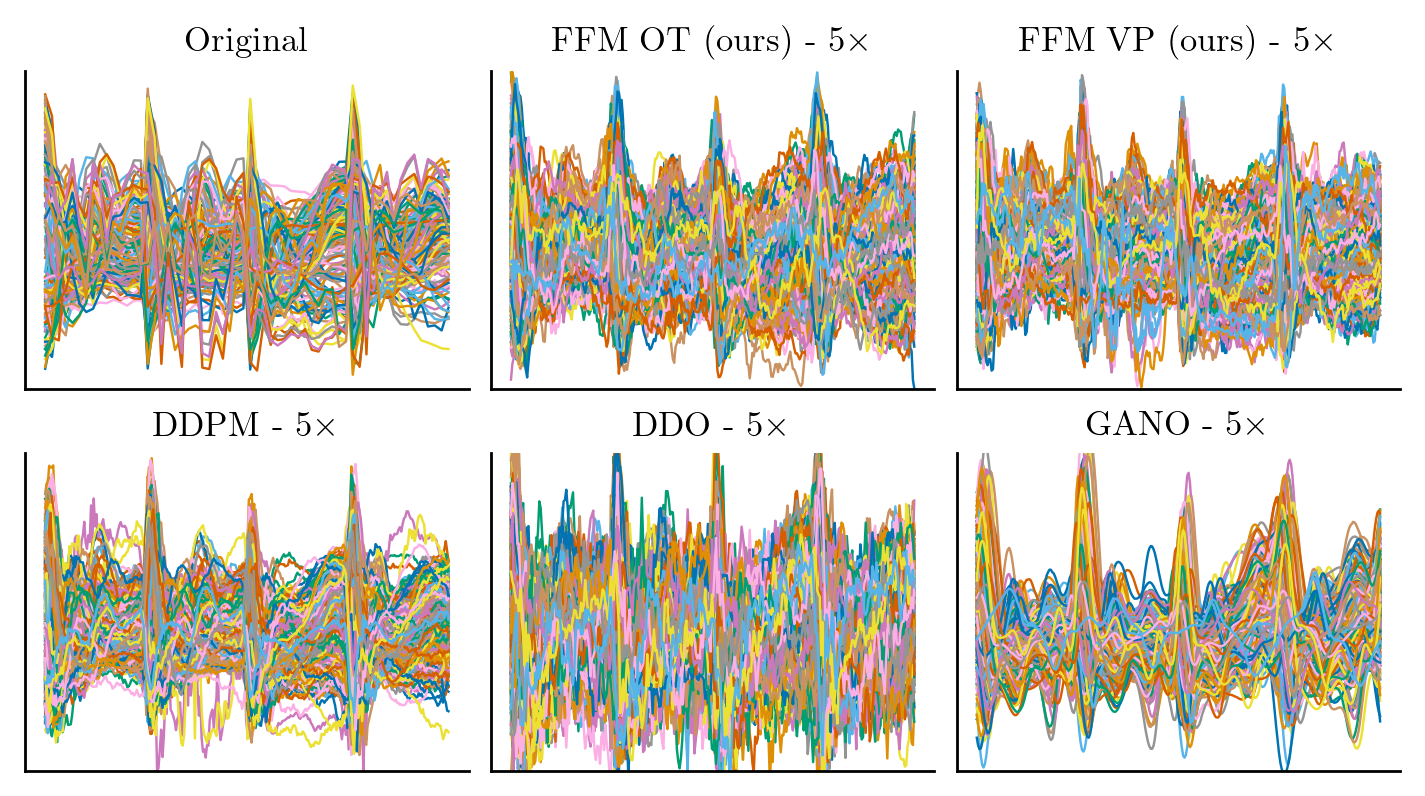}
        \caption{Samples from the gene expression dataset and samples from the various models at a 5x super-resolution.}
        \label{fig:genes_superres}
    \end{figure*}
    
    \begin{figure*}[!ht]
        \centering
        \includegraphics[width=\textwidth]{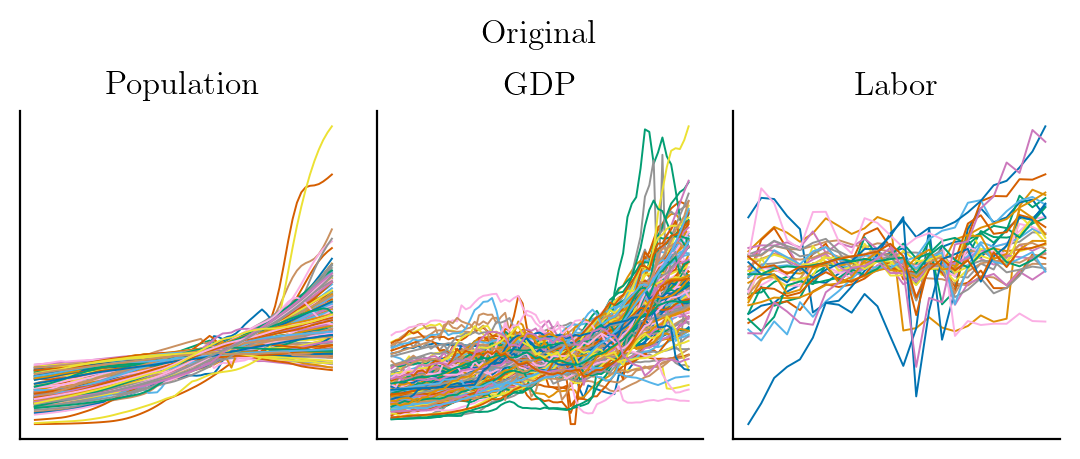}
        \includegraphics[width=\textwidth]{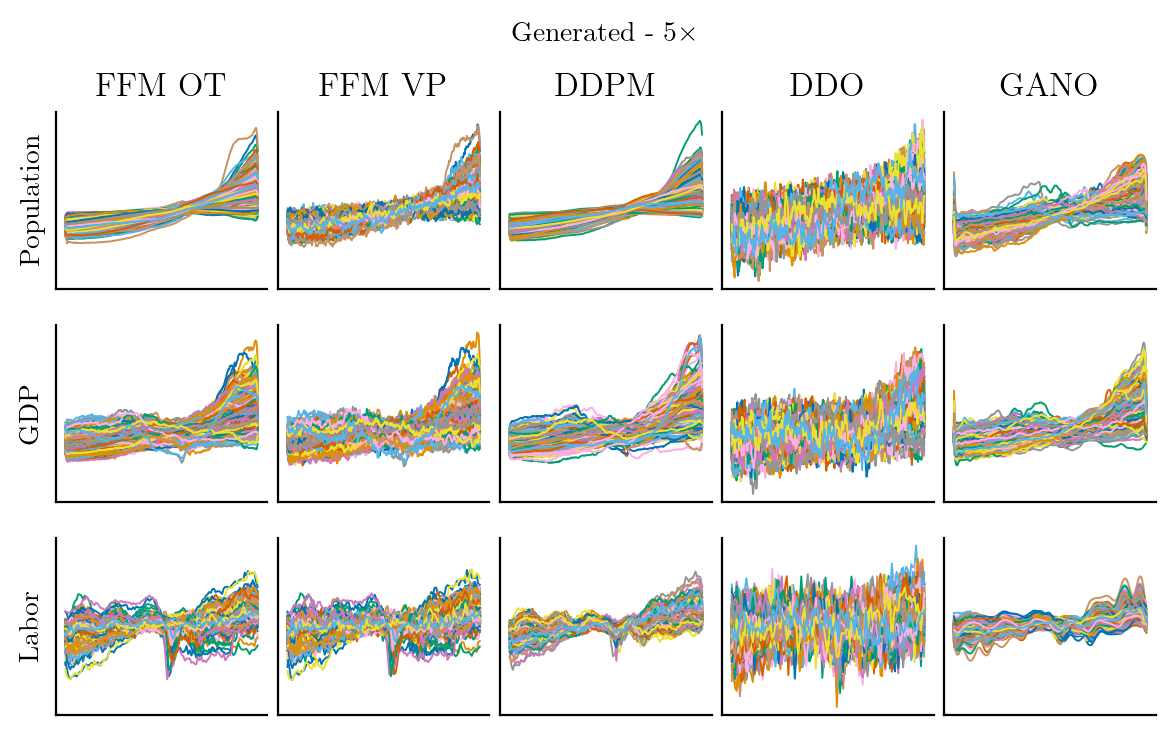}
        \caption{Samples from the three economics datasets and samples from the various models at a 5x super-resolution.}
        \label{fig:econ_superres}
    \end{figure*}
    
    \begin{figure*}[!ht]
    \centering
        \includegraphics[width=0.9\textwidth]{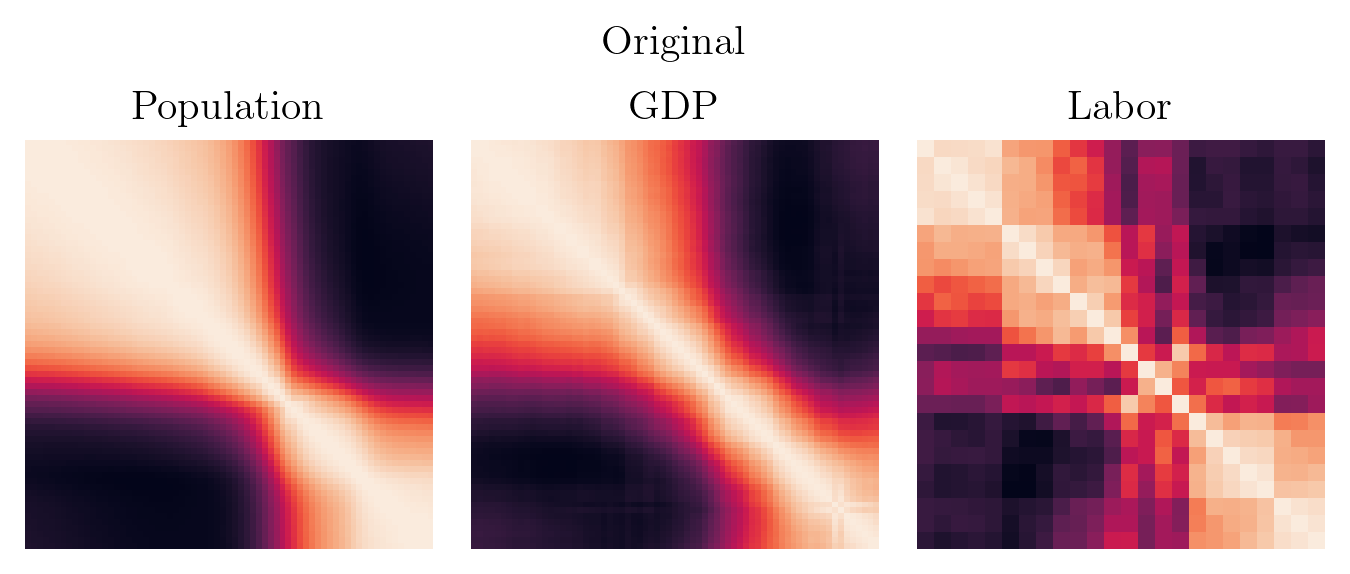}
        \includegraphics[width=0.95\textwidth]{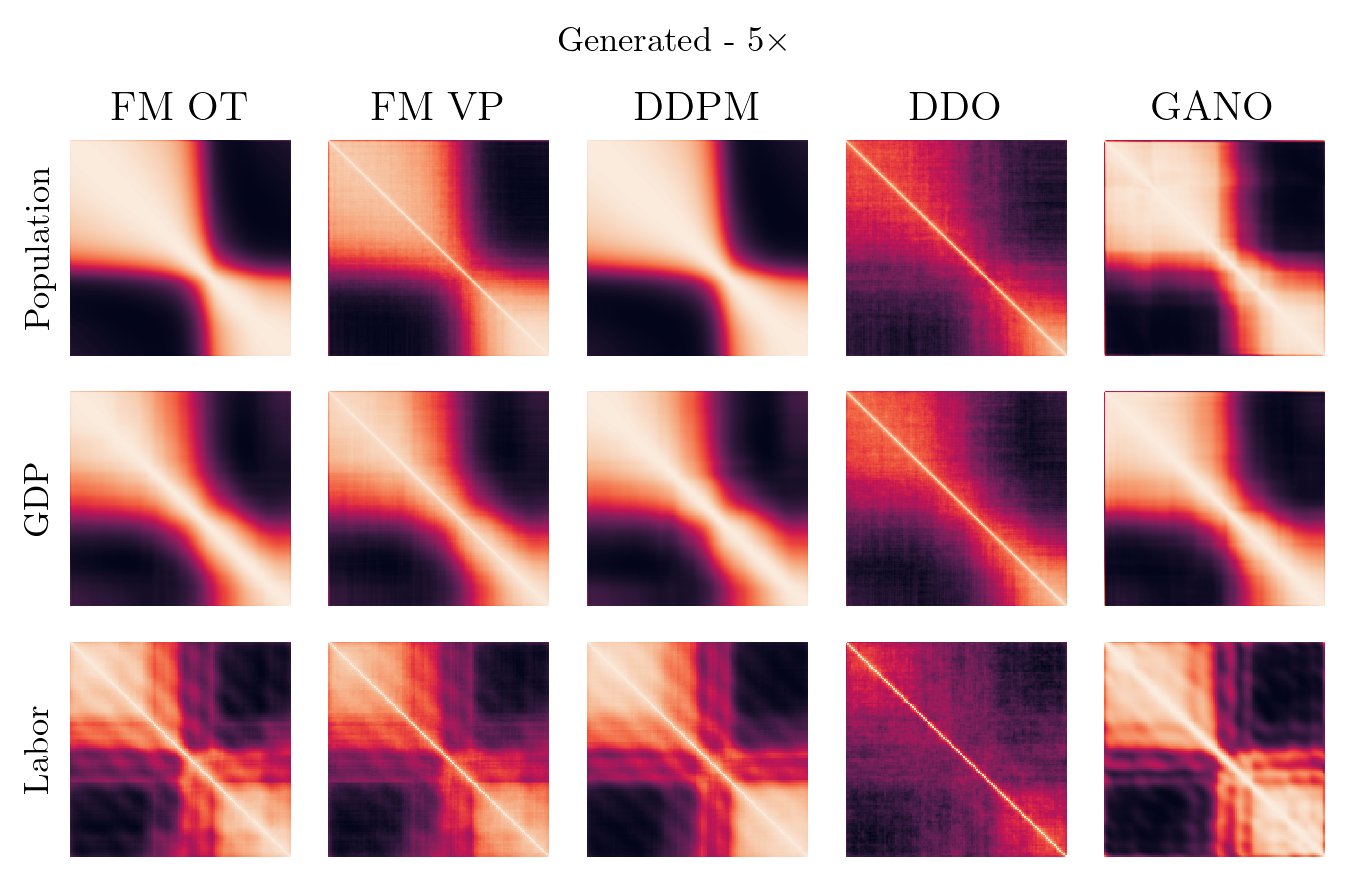}
        \caption{Correlation matrices for the three economics datasets (top row), as well as correlation matrices for each dataset for the generated samples from each model at a 5x super-resolution.}
        \label{fig:econ_superres_corr}
    \end{figure*}

\clearpage
\subsection{Additional Results: 1D Datasets}
\label{appendix:addl_1d_results}

    This section provides a further analysis and visualization of the generated samples for the 1D datasets considered in the main paper. In Figures \ref{fig:appendix_aemet_detailed} through \ref{fig:appendix_genes_detailed}, we plot the original data, generated samples from each model, and various pointwise statistics for the real and generated samples. Curves in black indicate the corresponding pointwise statistic for the original dataset (top row). The green error bands represent the minimal and maximal value of the pointwise statistic from 500 samples across ten random seeds of the corresponding model. The dashed green lines indicate the mean pointwise statistic across these ten runs for each model. See Table \ref{tab:pointwise_mses_aemet} for a quantitative comparison derived from these figures.
    
    \begin{figure*}[t]%
        \centering
        \subfloat{\includegraphics[width= 0.2\textwidth]{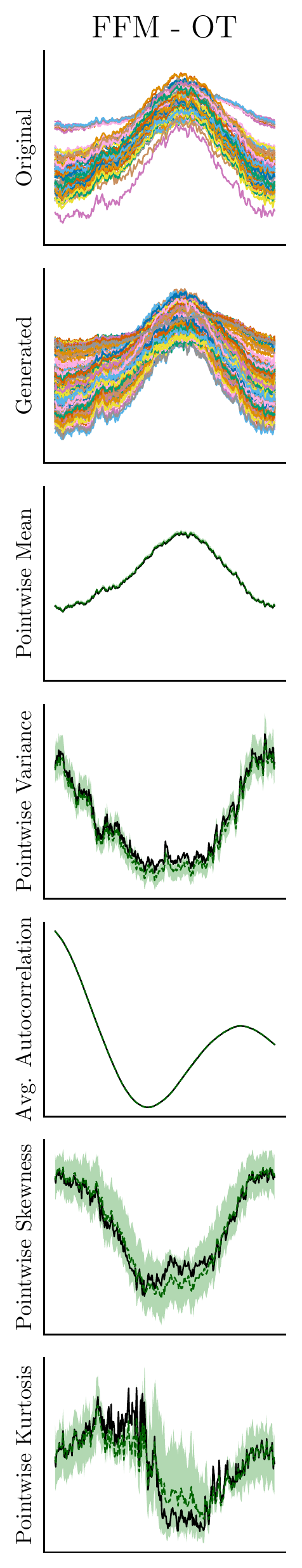}}%
        \subfloat{\includegraphics[width= 0.2\textwidth]{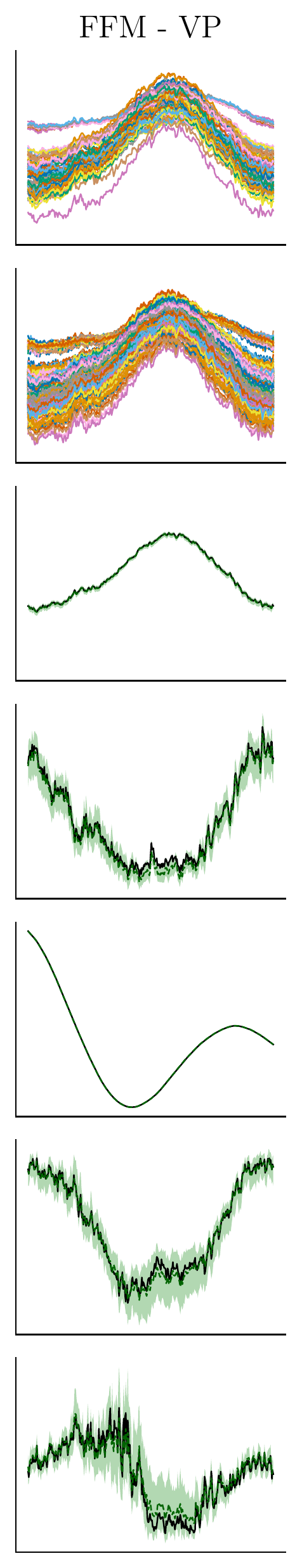}}%
        \subfloat{\includegraphics[width= 0.2\textwidth]{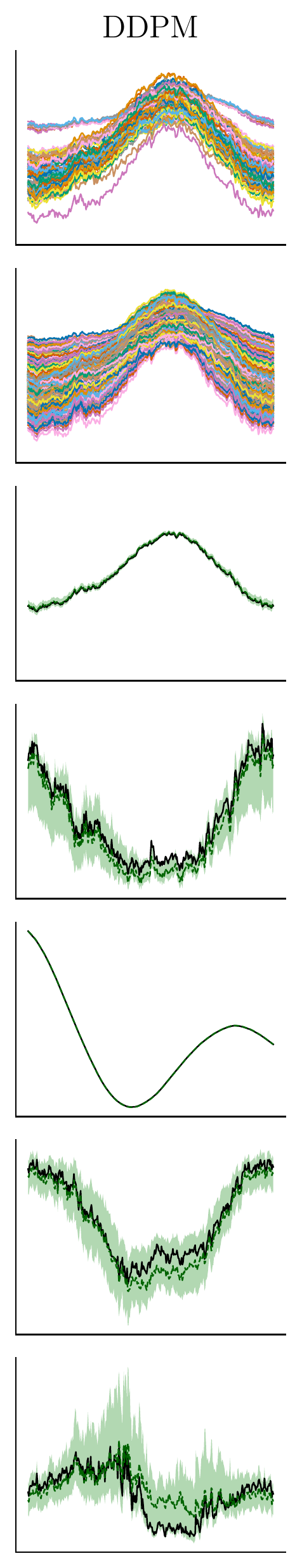}}%
        \subfloat{\includegraphics[width= 0.2\textwidth]{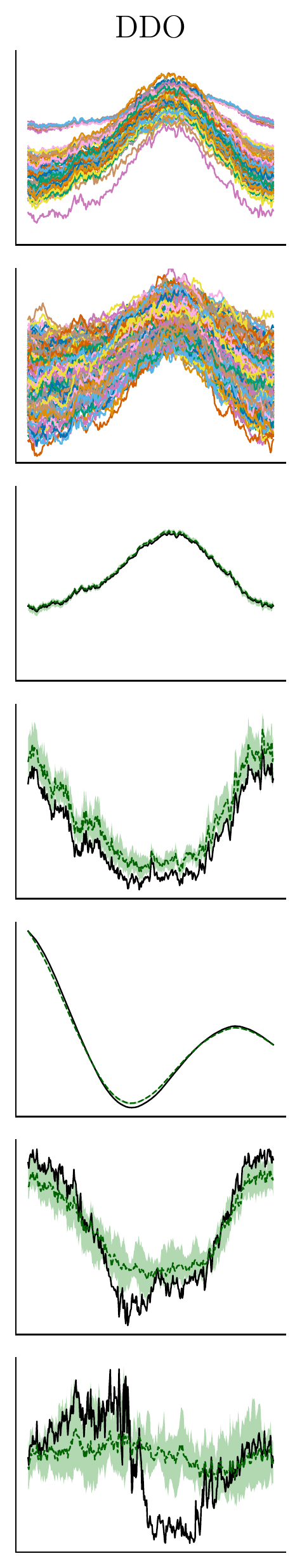}}%
        \subfloat{\includegraphics[width= 0.2\textwidth]{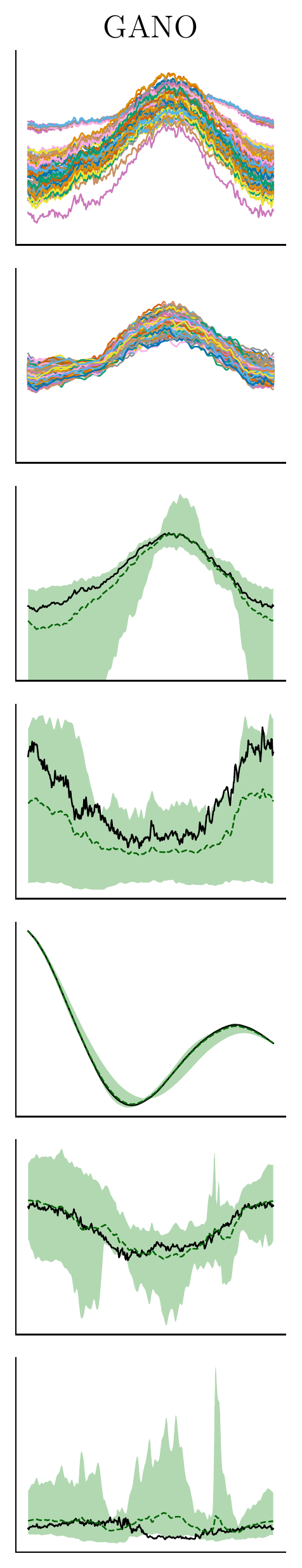}}%
        \caption{Various pointwise statistics on the AEMET dataset.}%
        \label{fig:appendix_aemet_detailed}%
    \end{figure*}

    \begin{figure*}[!ht]%
    \centering
    \subfloat{\includegraphics[width= 0.2\textwidth]{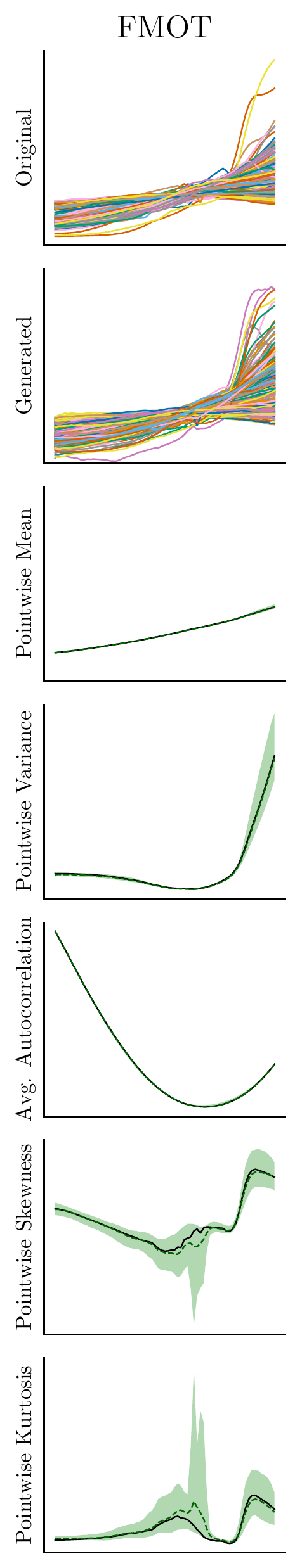}}%
    \subfloat{\includegraphics[width= 0.2\textwidth]{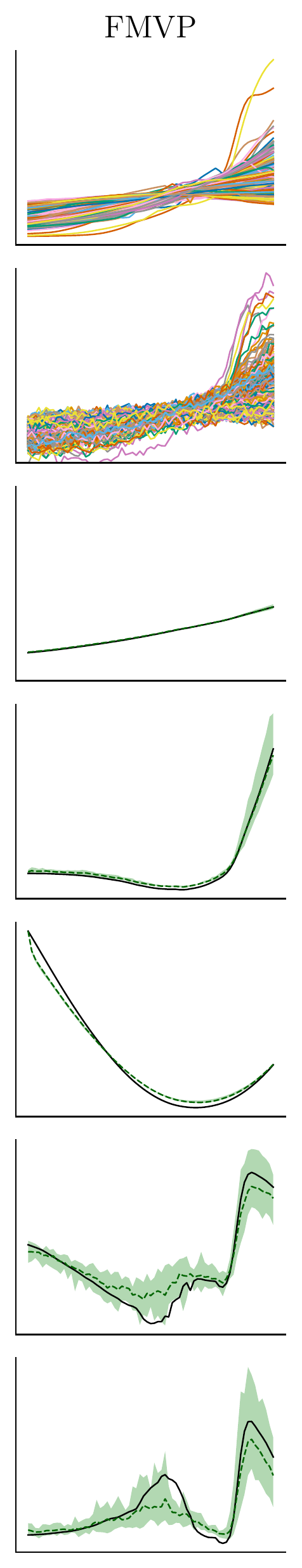}}%
    \subfloat{\includegraphics[width= 0.2\textwidth]{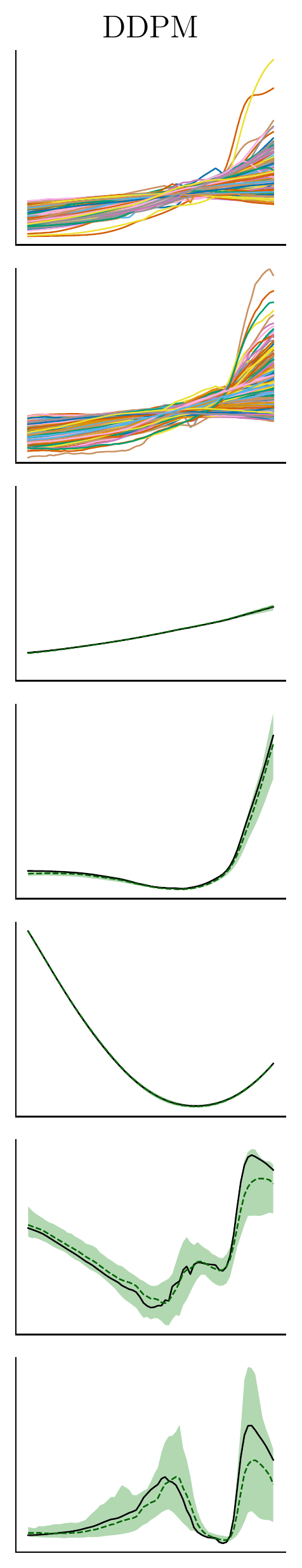}}%
    \subfloat{\includegraphics[width= 0.2\textwidth]{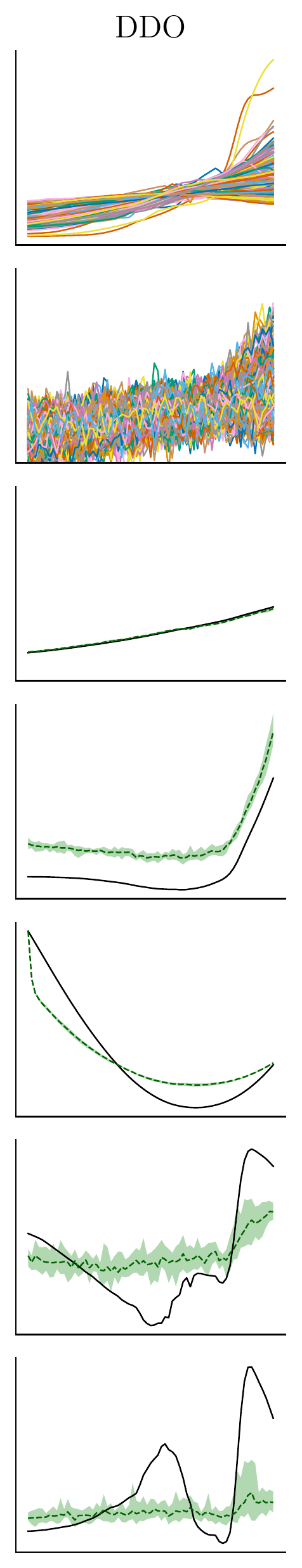}}%
    \subfloat{\includegraphics[width= 0.2\textwidth]{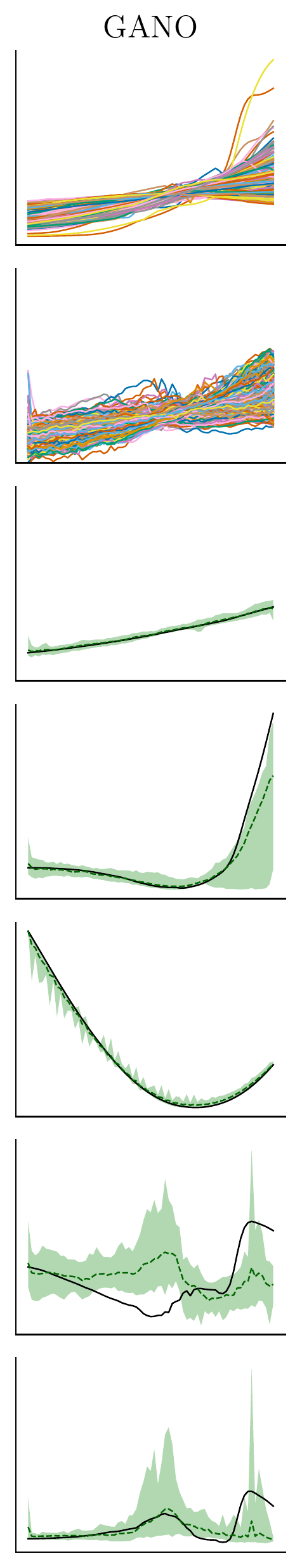}}%
    \caption{Various pointwise statistics on the Population dataset.}%
    \label{fig:appendix_pop_detailed}%
    \end{figure*}
    \begin{figure*}[!ht]%
    \centering
    \subfloat{\includegraphics[width= 0.2\textwidth]{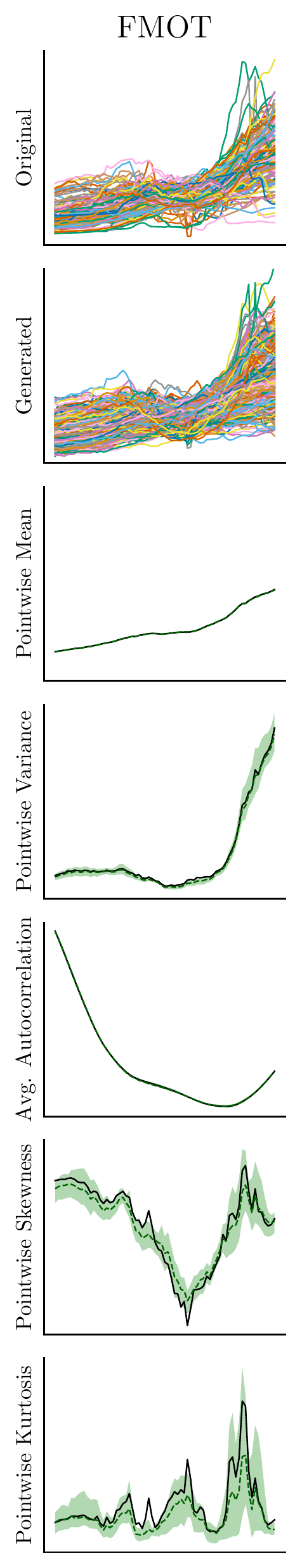}}%
    \subfloat{\includegraphics[width= 0.2\textwidth]{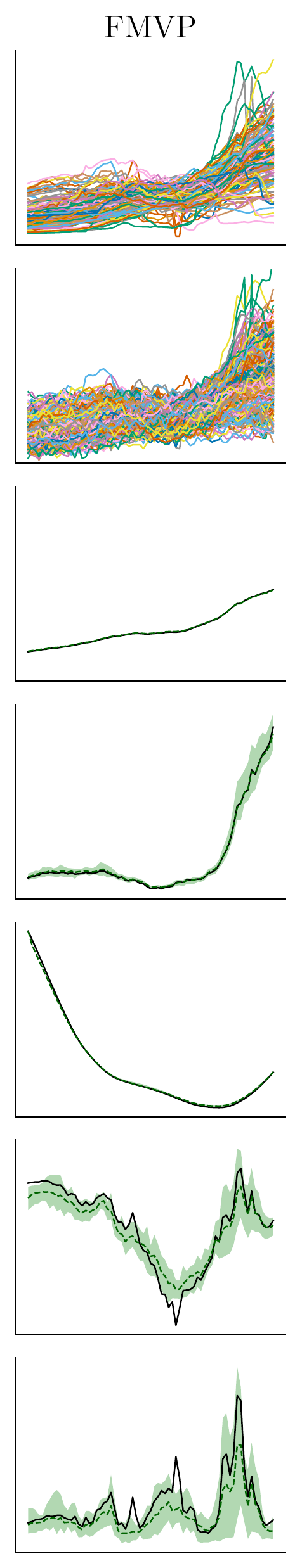}}%
    \subfloat{\includegraphics[width= 0.2\textwidth]{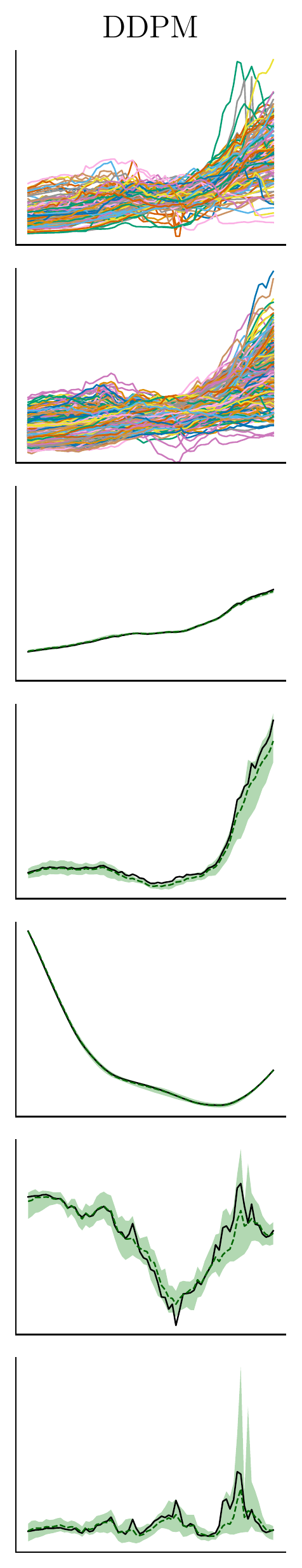}}%
    \subfloat{\includegraphics[width= 0.2\textwidth]{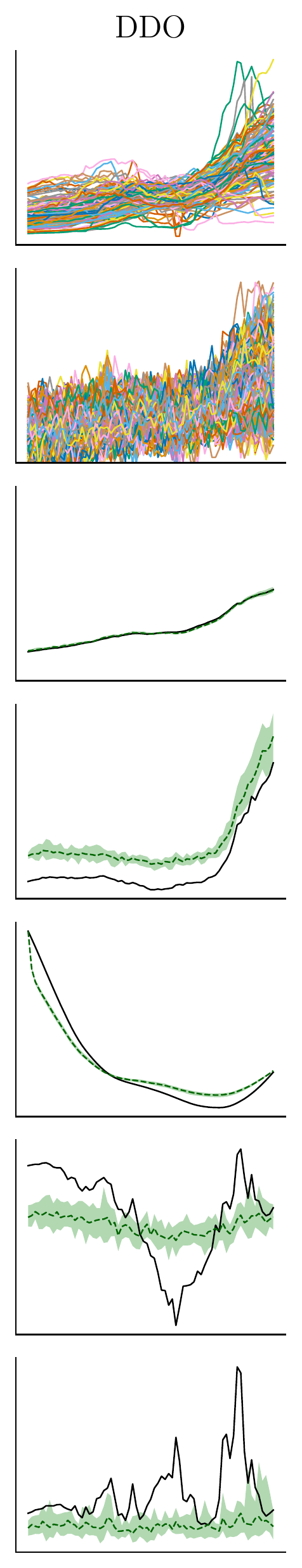}}%
    \subfloat{\includegraphics[width= 0.2\textwidth]{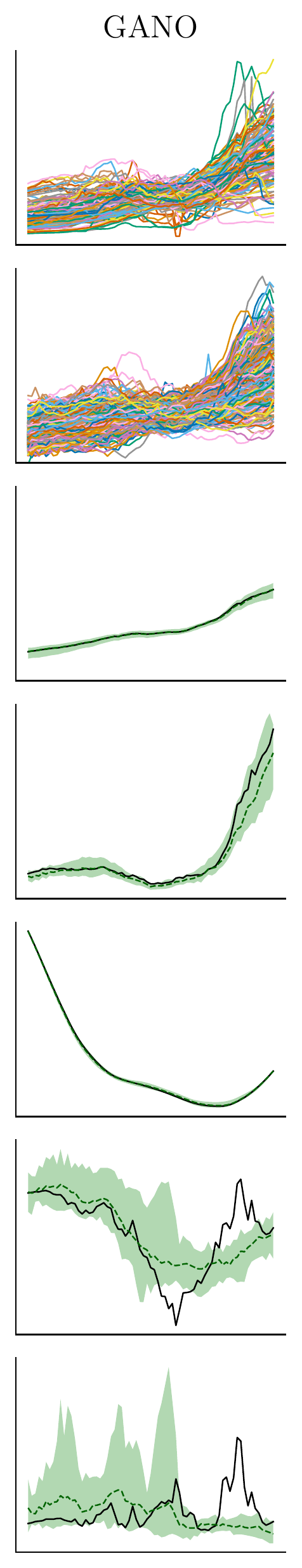}}%
    \caption{Various pointwise statistics on the GDP dataset.}%
    \label{fig:appendix_gdp_detailed}%
    \end{figure*}
    \begin{figure*}[!ht]%
    \centering
    \subfloat{\includegraphics[width= 0.2\textwidth]{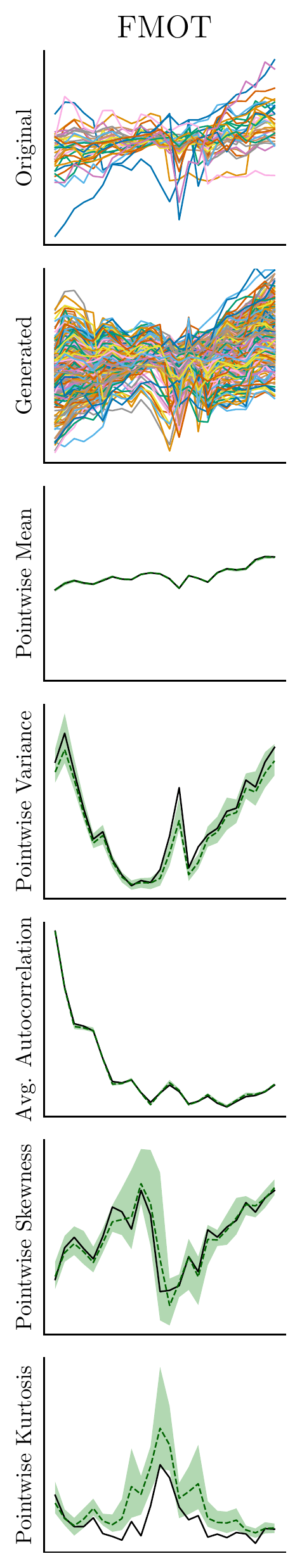}}%
    \subfloat{\includegraphics[width= 0.2\textwidth]{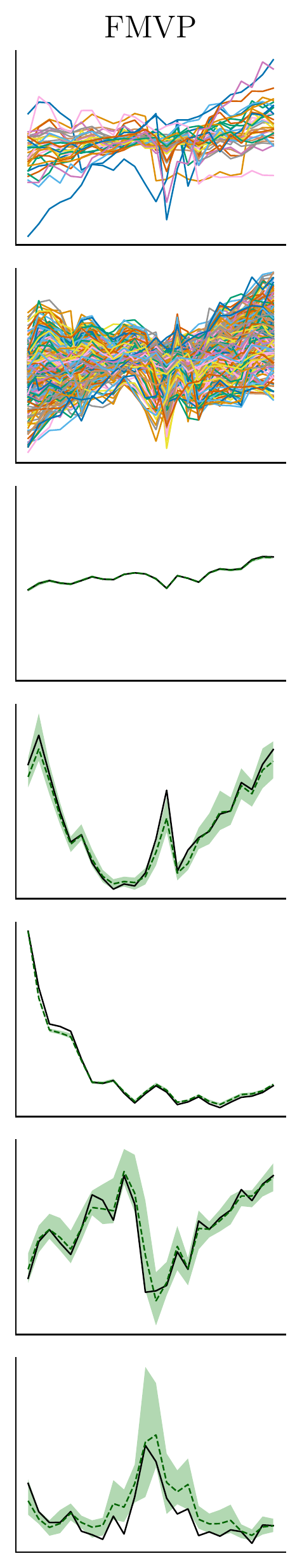}}%
    \subfloat{\includegraphics[width= 0.2\textwidth]{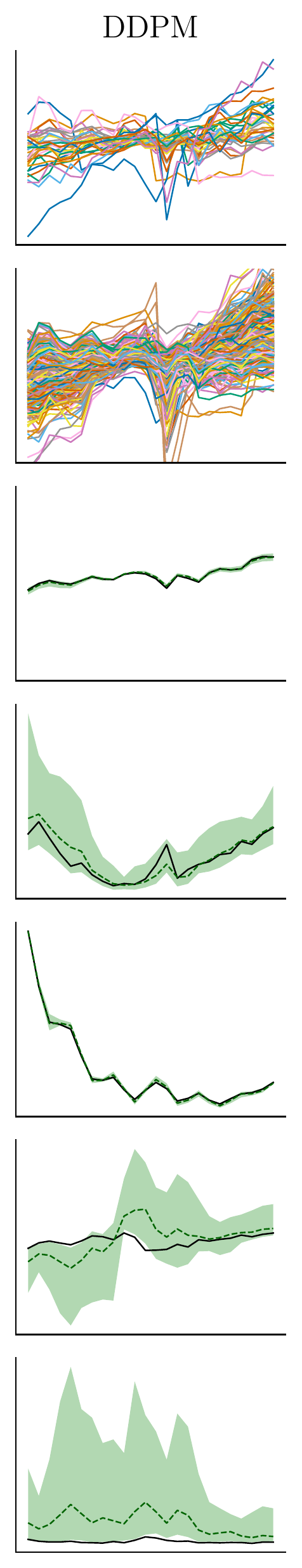}}%
    \subfloat{\includegraphics[width= 0.2\textwidth]{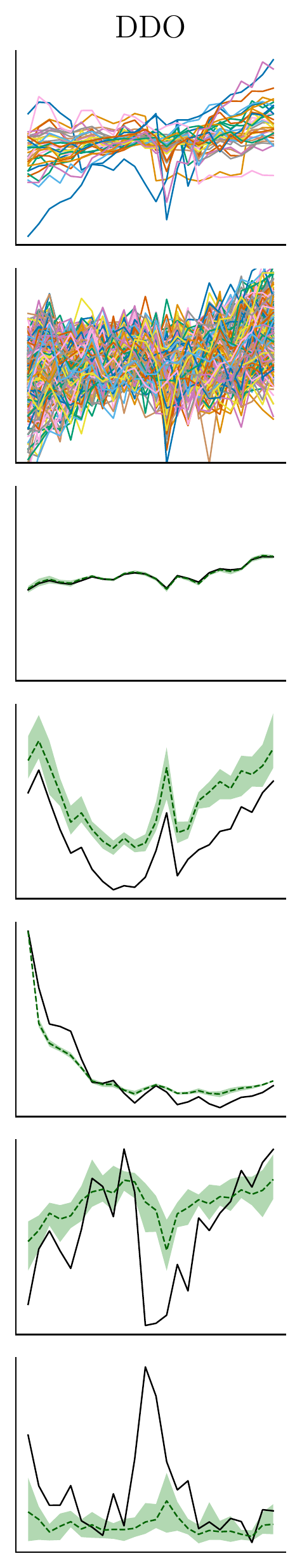}}%
    \subfloat{\includegraphics[width= 0.2\textwidth]{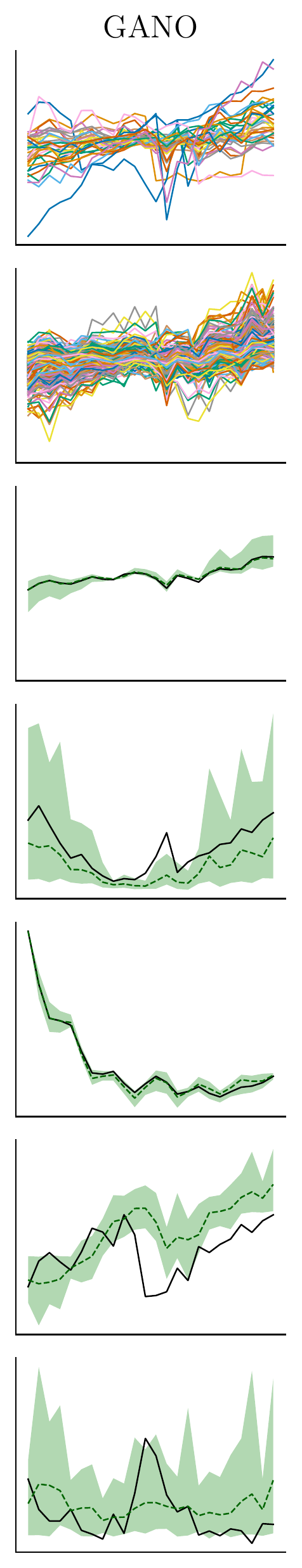}}%
    \caption{Various pointwise statistics on the Labor dataset.}%
    \label{fig:appendix_labor_detailed}%
    \end{figure*}

    \begin{figure*}[!ht]%
    \centering
    \subfloat{\includegraphics[width= 0.2\textwidth]{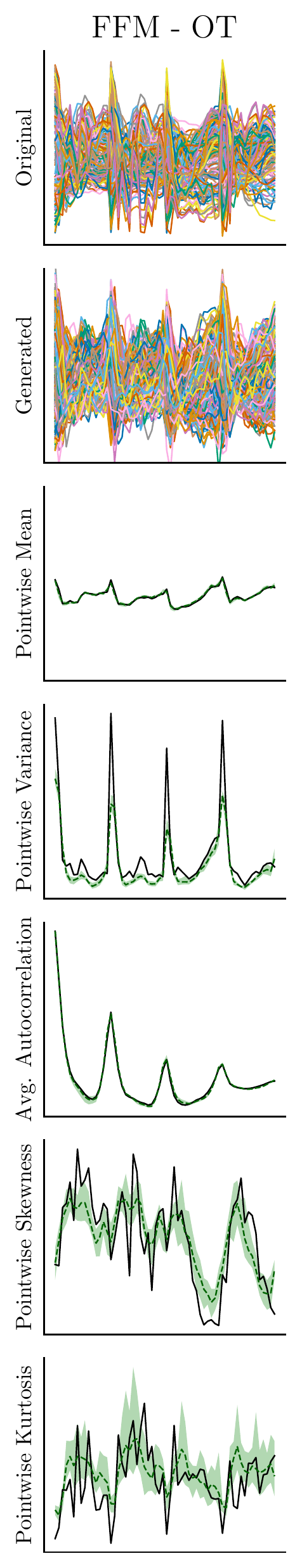}}%
    \subfloat{\includegraphics[width= 0.2\textwidth]{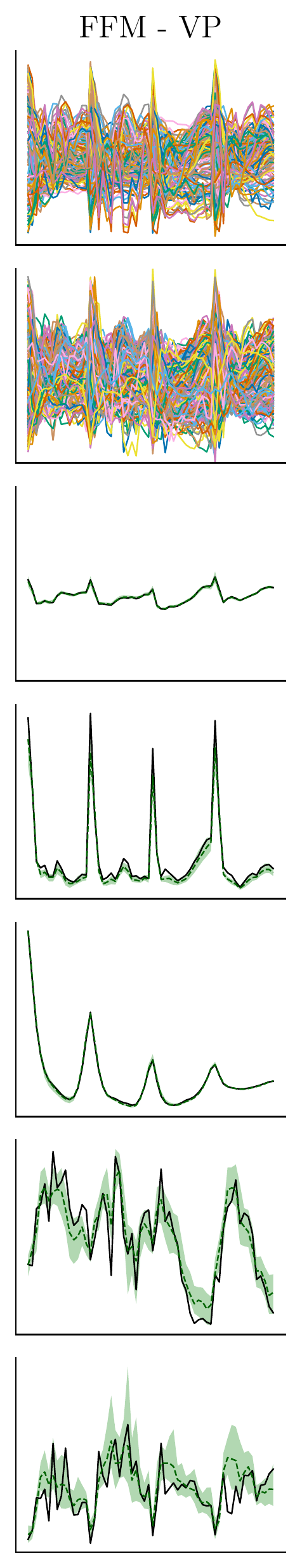}}%
    \subfloat{\includegraphics[width= 0.2\textwidth]{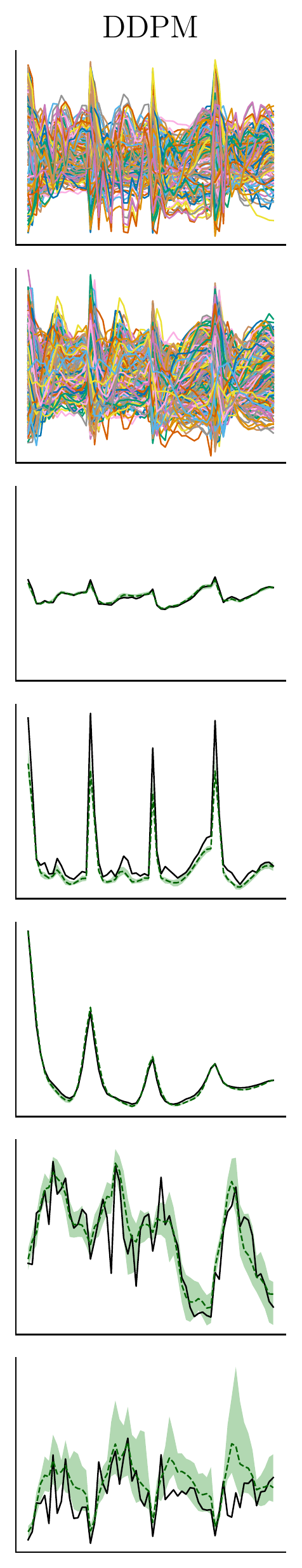}}%
    \subfloat{\includegraphics[width= 0.2\textwidth]{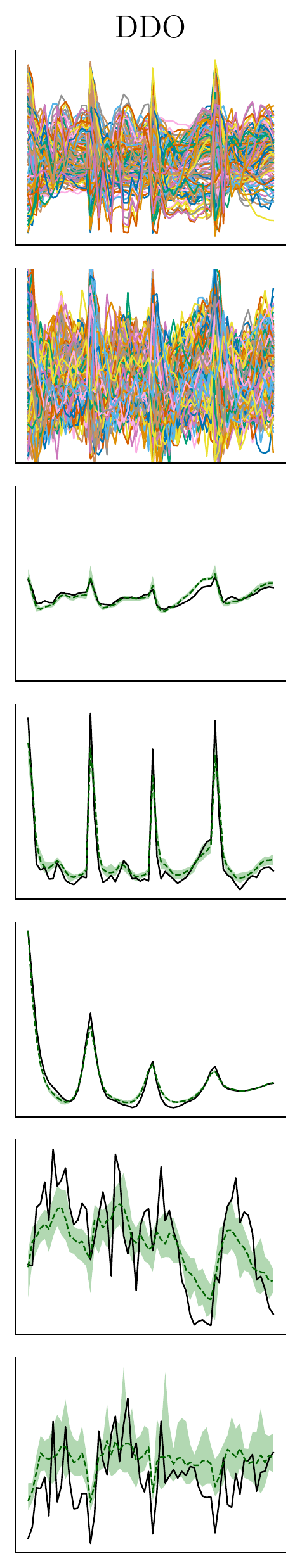}}%
    \subfloat{\includegraphics[width= 0.2\textwidth]{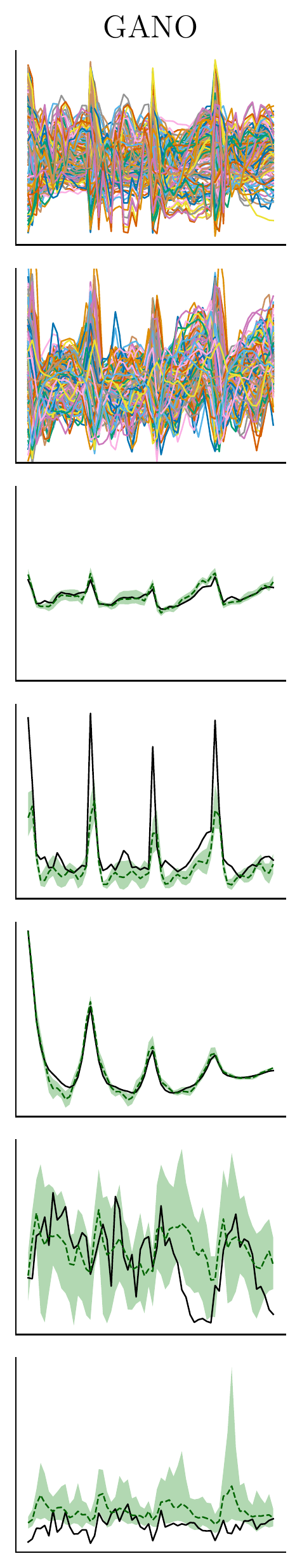}}%
    \caption{Various pointwise statistics on the Genes dataset.}%
    \label{fig:appendix_genes_detailed}%
    \end{figure*}

\clearpage
\subsection{Additional Results: Navier-Stokes Dataset}
\label{appendix:addl_ns}

   In this section, we provide additional visualizations and evaluation on the Navier-Stokes dataset, corresponding to the samples in Figure \ref{fig:ns_samples} and Table \ref{tab:ns_results} in the main paper. The first row of Figure \ref{fig:ns_density_spectrum} plots a Gaussian KDE with a fixed bandwidth of 0.5 for the pixel-wise values of both the real and generated samples across all methods. We observe that FFM and DDPM closely match the ground-truth distribution, whereas DDO places too much mass around zero, and GANO learns a multimodal distribution. In the second row, we plot the spectrum of both the real and generated samples, i.e. the log-energy as a function of the wavenumber. We see that FFM and DDPM closely match the true spectrum for low wavenumbers, whereas the fits of DDO and GANO are less close. For all models, the generated samples fail to match the true spectrum at high wavenumbers. We obtain quantiative metrics from these visualizations by considering the pointwise MSE between the ground truth and generated curves to obtain the metrics in Table \ref{tab:ns_results}.

    \begin{figure}[h]
        \centering
        \includegraphics[width=\textwidth]{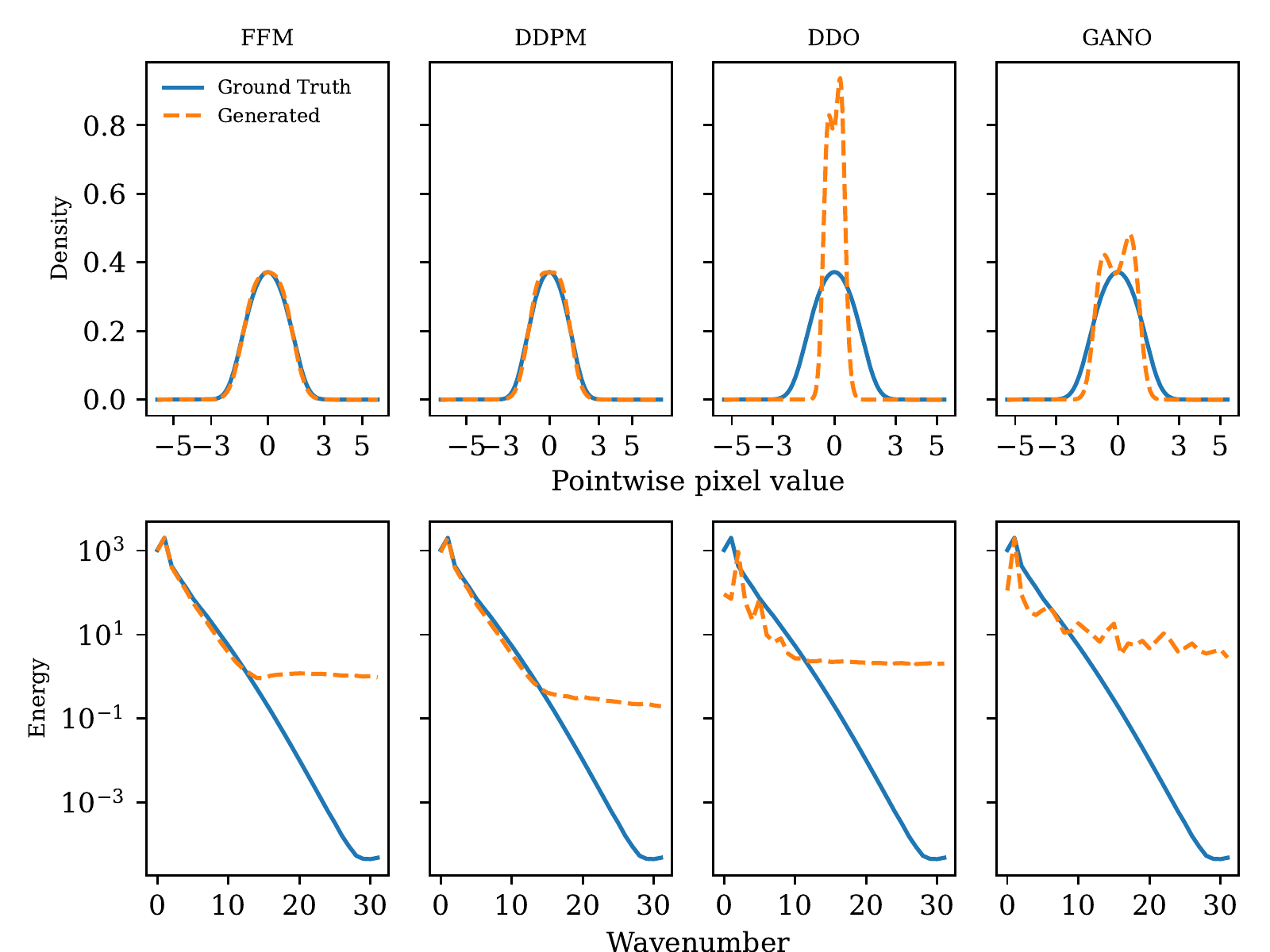}
        \caption{Additional visualizations corresponding to the samples in Figure \ref{fig:ns_samples} and Table \ref{tab:ns_results}. We use 1000 samples from each of the models.}
        \label{fig:ns_density_spectrum}
    \end{figure}

\section{Conditional Models}
\label{appendix:conditional_models}

    In addition to unconditional generation of functions, we demonstrate that our method can be extended to perform conditional generation. That is, we have access to side information $z \in \R^d$ (assumed to be finite dimensional) and we are interested in sampling from the conditional data measure $\nu(\d f \mid z)$. For instance, $z$ could be a collection of observed values of some function, and we may be interesting in generating functions which interpolate (or extrapolate) these given observations. We describe two approaches to performing conditional generation: one based on a modified training process, and a second based on a modified sampling process. In Figure \ref{fig:conditional}, we demonstrate these two approaches on the AEMET dataset.

    \paragraph{Conditional Training.} Using the unconditional paths of measures $\mu_t^f$ as described in the unconditional setting, we may define a conditional marginal $\mu_t(\d f \mid z)$ by mixing over $\d \nu(f \mid z)$, i.e. ${\mu_t(A \mid z) := \int_\FF \mu_t^f(A) \d \nu(f \mid z)}$.

    As long as $\mu_t^f$ is concentrated around $f$, then $\mu_t(\d f \mid z) \approx \nu(\d f \mid z)$. Note that this condition is satisfied for the paths of measures constructed for unconditional generation, and hence no modification is necessary. However, modifying the conditional measures to account for the information $z$ could potentially be beneficial, and we leave exploration of such design choices to future work. In all, we obtain a modified loss function
    
    \begin{equation}
        \JJ_{\text{C}}(\theta) = \E_{t\sim\UU[0,1], z \sim q(z), f \sim \nu(\d f \mid z), g \sim \mu_t^f} \left[ \norm{ v_t^f(g) - u_t(f \mid z, \theta)  }^2\right].
    \end{equation}
    
    In other words, we simply adapt our model architecture to also take in conditioning information $z$ at training time. In practice, because $z$ is assumed to be finite dimensional, we concatenate $z$ to the input of our FNO model \citep{li2020fourier}. We note that a similar loss appears in the context of Flow Matching generative models for video, as proposed by \citep{davtyan2022randomized}.

    \paragraph{Conditional Sampling.} As an alternative, we may instead modify the sampling process to account for $z$. This allows one to train an unconditional model and sample conditionally at generation time (in contrast to the conditional training setup, which only allows you to condition on the particular form of $z$ you have trained on). Here, we assume that $z = (\vec{x}, \vec{y})$ consists of a collection of function observations, and that we would like to generate functions whose values match those observed in $z$.

    In order to achieve this, at time $t \in [0, 1]$, we take a step as dictated by our ODE solver and model vector field to obtain a function $\tilde{f}_t$. Next, we flow the information contained in $z$ forwards for $t$ seconds along the conditional vector field designated by our model to obtain $z_t = (\vec{x}, \vec{y}_t)$. Then, we set $f_t(\vec{x}) = \vec{y}_t$. This approach can be seen as an extension of the ILVR method, which has been successfully applied to diffusion models for conditional image generation \citep{choi2021ilvr} and diffusion models for conditional function generation \citep{kerrigan2022diffusion}. 
\end{document}